\documentclass[letterpaper]{article} 
\usepackage[preprint]{aaai2027}
\usepackage[hyphens]{url}  
\usepackage{graphicx} 
\usepackage{natbib}  
\usepackage{caption} 
\usepackage{algorithm}
\usepackage{algpseudocode} 
\usepackage{subcaption}

\usepackage{newfloat}
\usepackage{listings}
\DeclareCaptionStyle{ruled}{labelfont=normalfont,labelsep=colon,strut=off} 
\floatstyle{ruled}
\newfloat{listing}{tb}{lst}{}
\floatname{listing}{Listing}

\usepackage{booktabs}

\usepackage{amsmath}
\usepackage{amssymb}
\usepackage{amsthm}
\usepackage{mathtools} 
\usepackage{multirow} 

\usepackage[capitalize]{cleveref}
\crefname{section}{Sec.}{Secs.}
\Crefname{section}{Section}{Sections}
\crefname{table}{Tab.}{Tabs.}
\Crefname{table}{Table}{Tables}
\newtheorem{theorem}{Theorem}
\newtheorem{proposition}{Proposition}

\crefname{theorem}{Thm.}{Thms.}
\Crefname{theorem}{Theorem}{Theorems}
\crefname{proposition}{Prop.}{Props.}
\Crefname{proposition}{Proposition}{Propositions}
\crefname{lemma}{Lemma}{Lemmas}
\Crefname{lemma}{Lemma}{Lemmas}

\usepackage{xspace}
\makeatletter
\DeclareRobustCommand\onedot{\futurelet\@let@token\@onedot}
\def\@onedot{\ifx\@let@token.\else.\null\fi\xspace}
\def\eg{\emph{e.g}\onedot}

\def\ie{\emph{i.e}\onedot}

\makeatother

\title{Hyper-FSAD: Training-Free and Language-Free Few-Shot Anomaly Detection \\ via Sparse Hyper Matching}
\author{
    Guohuan Xie\textsuperscript{\rm 1},
    Xin He\textsuperscript{\rm 2},
    Dingying Fan\textsuperscript{\rm 1},
    Siqi Li\textsuperscript{\rm 3}\corresponding,
    Yun Liu\textsuperscript{\rm 1,4,5}\corresponding
}
\affiliations{
    \textsuperscript{\rm 1}VCIP, College of Computer Science, Nankai University\\
    \textsuperscript{\rm 2}School of Computer Science and Engineering, Tianjin University of Technology\\
    \textsuperscript{\rm 3}School of Software, Tsinghua University\\
    \textsuperscript{\rm 4}Academy for Advanced Interdisciplinary Studies, Nankai University\\
    \textsuperscript{\rm 5}Nankai International Advanced Research Institute, Shenzhen Futian
}

\begin{document}

\maketitle


\begin{abstract}
    Few-shot anomaly detection (FSAD) is particularly valuable when only a few normal images are available in a new target domain, while anomalous cases are rare, diverse, and difficult to enumerate in advance. However, existing methods often still require task-specific fitting or language prompts, and their patch-level retrieval commonly relies on brittle nearest-neighbor or fixed Top-$p$ rules. We propose \textbf{Hyper-FSAD}, a training-free and language-free framework that performs support-only inference with a frozen visual encoder. We formulate FSAD as support-only scoring in frozen feature space and establish the selective stability of sparse retrieval and sufficient conditions for normal--anomaly separation. Guided by this analysis, \textbf{Sparse Hyper Matching} uses \textit{sparsemax} to adaptively select support patches for each query patch, exactly suppressing below-threshold distractors without a manually specified retrieval hyperparameter. \textbf{Dual-Branch Image Scoring} further combines local reconstruction discrepancies with support-conditioned global <CLS> deviation. Across four industrial and two medical benchmarks, Hyper-FSAD achieves the best overall performance across the 1/2/4-shot settings, while requiring only 52.6\,ms per image and 0.89\,GB GPU memory. The code will be released.
\end{abstract}

\section{Introduction}


Few-shot anomaly detection (FSAD) aims to detect anomalies in images and identify potentially anomalous regions with only a few normal samples available for support. 
It is particularly important in real-world applications where representative target-domain data are scarce and anomalous cases are rare, diverse, and difficult to collect or enumerate in advance, such as medical image analysis~\cite{mahapatra2021medical,menze2014multimodal} and industrial defect detection~\cite{qu2023investigating,chen2025center}.

Existing FSAD methods generally follow two paradigms: source-domain training and target-domain training. Source-domain training typically trains a model on a dataset's training set, potentially augmented with synthetic anomalies, and then transfers it to new datasets for testing~\cite{tao2025kernel,ma2025remp,huang2022registration,zhou2024anomalyclip,chen2025center}. In target-domain training, a prompt or adapter is learned on the target dataset using only normal samples before testing~\cite{li2024promptad,fang2023fastrecon,jeong2023winclip}. 
However, both paradigms remain constrained in real-world deployment because they rely on training or adaptation under specific data distributions.

\begin{figure}[!t]
    \centering
    \includegraphics[width=0.90\linewidth]{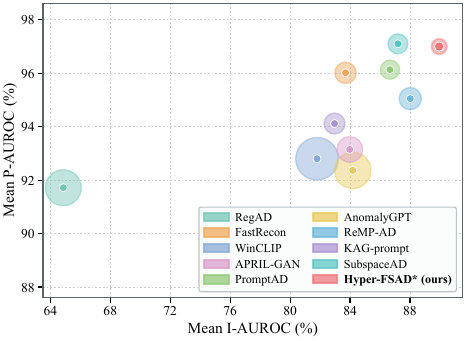}
    \caption{One-shot accuracy--efficiency comparison of Hyper-FSAD with representative FSAD methods across six datasets. Each point reports the mean image- and pixel-level AUROC, while bubble area is log-scaled by per-image runtime (smaller is faster). Hyper-FSAD$^\ast$ denotes our DINOv3 ViT-B/16 instantiation.}
    \label{fig:1shot_accuracy_efficiency}
    \vspace{-3mm}
\end{figure}

In practice, target-domain distributions may evolve continuously and are difficult to model explicitly.
For example, lighting fluctuations or material batch differences can introduce implicit distribution shifts~\cite{guo2024impact,jsss-14-119-2025,wang2025industrial}, causing trained models to degrade during deployment. Moreover, data privacy and cross-domain sharing restrictions can make repeated retraining or domain adaptation difficult to implement. These challenges limit the robustness and deployability of training-dependent FSAD paradigms.

To improve generalization under limited target-domain data, recent FSAD methods have increasingly leveraged the language-derived semantic priors of large-scale vision-language models, particularly CLIP \cite{radford2021learning}, demonstrating impressive performance across various domains \cite{jeong2023winclip,cao2024adaclip,gu2024anomalygpt,qu2024vcp,zhou2024anomalyclip,li2024promptad,tao2025kernel,ma2025remp}. However, these methods often rely on complex prompt engineering or additional training strategies, which still limit their flexibility and deployability. Moreover, their dependence on textual descriptions restricts their applicability to purely visual encoders such as DINOv3 \cite{simeoni2025dinov3}. This raises an important question: Do we really need language supervision for FSAD?

Beyond their reliance on training and language supervision, existing FSAD methods also face limitations in patch-level visual matching. Most methods match each query patch to its nearest normal support patch and convert the similarity into an anomaly score~\cite{jeong2023winclip,li2024promptad,tao2025kernel}. However, under limited normal references, such one-to-one matching is easily affected by background clutter, noise, and local appearance variations, leading to accidental matches or mismatches. This issue is especially evident in medical and industrial images, where anomalous regions may resemble normal tissues or materials, while normal regions can exhibit large intra-class variations. Some methods therefore adopt Top-$p$ matching to aggregate multiple support patches~\cite{ma2025remp}. Yet, its performance depends heavily on the manually specified $p$: a small $p$ provides insufficient normal evidence, whereas a large $p$ may introduce irrelevant patches and dilute anomaly cues. These limitations motivate a data-adaptive matching mechanism that selects relevant normal evidence without a fixed size.

To address these challenges, we take a theory-guided approach. We formulate FSAD as a generic support-only scoring problem in frozen feature space and derive sufficient normal-coverage and anomaly-margin conditions for separation, an exponential few-shot coverage bound, and selective-stability guarantees for retrieval as the memory grows. These results lead to \textbf{Hyper-FSAD}, a training-free and language-free framework that requires no target-task optimization, text input, or prompting, and is compatible with frozen visual backbones, including the CLIP image encoder~\cite{radford2021learning}, DINOv2~\cite{oquab2023dinov2}, and DINOv3~\cite{simeoni2025dinov3}. Here, \emph{Hyper} denotes a one-to-set matching structure: instead of pairing a query patch with a single support patch, each query patch and an adaptively selected set of normal support patches jointly form a higher-order matching unit. We instantiate this view as \textbf{Sparse Hyper Matching}, where \textit{sparsemax}~\cite{martins2016softmax,peters2019sparse} assigns nonzero weights only to a sparse, similarity-dependent subset of relevant support patches---without fixing its size---and reconstructs the query patch from this subset; all below-threshold support patches receive exactly zero weight. Our analysis proves that adding any number of such below-threshold distractors leaves the reconstruction unchanged and that the retrieval weights are non-expansive, while the coverage--margin conditions provide a sufficient guarantee for training-free normal--anomaly separation. Finally, \textbf{Dual-Branch Image Scoring} combines a spatial branch that summarizes patch reconstruction discrepancies with a support-aware <CLS> branch that captures global semantic deviation, producing the final anomaly score using only visual evidence.

Our main contributions are summarized as follows:
\begin{itemize}
    \item We formulate FSAD as a generic support-only scoring problem in frozen feature space. We prove sufficient normal-coverage and anomaly-margin conditions for training-free visual separation, and quantify few-shot global coverage as a function of $K$.
    \item The analysis identifies selective, simplex-valued retrieval as a key requirement. We prove exact below-threshold distractor rejection and non-expansive weights for Euclidean simplex projection, which directly motivates our \textbf{Sparse Hyper Matching}.
    \item Guided by the two additive separation margins, we propose \textbf{Hyper-FSAD}, combining sparse spatial reconstruction with support-conditioned global <CLS> evidence in a training-free and language-free framework.
\end{itemize}

\section{Related work}

FSAD has evolved rapidly, driven by the need to handle open-world defects with minimal supervision. Early approaches like RegAD~\cite{huang2022registration} adopted registration-based paradigms to align test images with a normal reference, while others, such as FastRecon~\cite{fang2023fastrecon}, focused on rapid feature reconstruction to identify deviations. Recently, the success of Vision-Language Models (VLMs) has spurred a wave of methods like WinCLIP~\cite{jeong2023winclip}, AnomalyCLIP~\cite{zhou2024anomalyclip}, PromptAD~\cite{li2024promptad}, and KAG-prompt~\cite{tao2025kernel}, which leverage CLIP's zero-shot capabilities by designing text prompts to distinguish normal from anomalous samples. ReMP-AD~\cite{ma2025remp} further enhances this by fusing multi-modal prompts with retrieval mechanisms. Despite their progress, these methods largely rely on either source-domain meta-training~\cite{huang2022registration} or target-domain training~\cite{li2024promptad}, and thus inherit the practical constraints of training in deployment, such as distribution drift, open-set defect emergence, limited compute budgets, and privacy restrictions that hinder frequent retraining or domain adaptation. Moreover, their dependence on text prompts often introduces a semantic gap when describing complex industrial defects. This issue is especially pronounced when defects are subtle, texture-like, or category-specific, where precise linguistic specification is difficult even for human experts. More recently, SubspaceAD~\cite{lendering2026subspacead} fits a per-category PCA subspace over frozen DINOv2 patch features and scores anomalies via the reconstruction residual. RAD~\cite{zhang2026training} instead uses fixed-memory nearest-neighbor retrieval and analyzes the stability of its distance-to-memory score. Complementarily, Hyper-FSAD studies data-adaptive sparse reconstruction and dual-granularity scoring: it constructs matches directly from visual embeddings, without target-category parameter/subspace fitting or prompt engineering, and our analysis characterizes the resulting adaptive support and few-shot separation.

\section{Theory-Guided Formulation}
\label{sec:theory}
Before specifying Hyper-FSAD, we ask which properties are sufficient for FSAD using only frozen visual features and normal supports. Our analysis first treats retrieval as an unspecified support-only operator; the resulting guarantees then determine the concrete design of Hyper-FSAD.

\subsection{A Generic Support-Only Scoring Family}
Let a frozen encoder produce unit-normalized patch tokens $\mathbf q_{X,t}^l$ and a unit-normalized global token $\mathbf c_X^l$ for image $X$, patch $t$, and layer $l\in L$. The $K$ normal support images induce patch and global memories
$\mathcal P^l=\{\mathbf p_i^l\}_{i=1}^{M}$ and
$\mathcal C^l=\{\mathbf c_k^l\}_{k=1}^{K}$, where $M=KN_p$.
For similarities $z_{X,t,i}^l=\langle\mathbf q_{X,t}^l,\mathbf p_i^l\rangle$, consider any training-free rule
$\rho^l:\mathbb R^M\rightarrow\Delta_M$, where
$\Delta_M=\{\mathbf w\geq\mathbf0:\mathbf1^\top\mathbf w=1\}$.
It yields weights $\mathbf w_{X,t}^l=\rho^l(\mathbf z_{X,t}^l)$ and normal evidence
$\boldsymbol\varepsilon_{X,t}^l=\sum_iw_{X,t,i}^l\mathbf p_i^l$.

Dense weighting accumulates irrelevant evidence, whereas hard nearest-neighbor and Top-$p$ rules predefine the retained neighborhood by a fixed cardinality or proportion and can switch discontinuously at rank boundaries. This motivates the following regularized simplex rule, which remains generic at this stage:
\begin{equation}
\rho_{\mathrm{proj}}(\mathbf z)
=\mathop{\arg\min}_{\mathbf w\in\Delta_M}
\tfrac12\|\mathbf w-\mathbf z\|_2^2.
\label{eq:projection_retrieval}
\end{equation}

\begin{proposition}[Adaptive support and selective stability]
\label{prop:sparse_support}
For a nonempty $R\subseteq\{1,\ldots,M\}$, let
$\tau_R=(\sum_{i\in R}z_i-1)/|R|$. If
\begin{equation}
\max_{j\notin R}z_j\leq\tau_R<\min_{i\in R}z_i,
\label{eq:support_condition}
\end{equation}
then \cref{eq:projection_retrieval} selects exactly $R$, with $w_i=z_i-\tau_R$ on $R$. Appending any number of candidates with similarity at most $\tau_R$ leaves the original weights and reconstruction unchanged. Moreover,
$\|\rho_{\mathrm{proj}}(\mathbf z)-\rho_{\mathrm{proj}}(\mathbf z')\|_2
\leq\|\mathbf z-\mathbf z'\|_2$.
\end{proposition}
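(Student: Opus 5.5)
The plan is to work directly with the KKT conditions of the strictly convex problem \cref{eq:projection_retrieval}. Since the objective is strongly convex and $\Delta_M$ is a nonempty closed convex set, the minimizer exists and is unique. Moreover, because the constraints are affine, the KKT conditions are both necessary and sufficient. With a multiplier $\tau$ for $\mathbf 1^\top\mathbf w=1$ and multipliers $\mu_i\ge 0$ for $w_i\ge 0$, stationarity reads $w_i-z_i+\tau-\mu_i=0$, together with complementary slackness $\mu_iw_i=0$. Eliminating $\mu_i$, this is equivalent to $w_i=\max(z_i-\tau,0)$ with $\tau$ chosen so that $\sum_i\max(z_i-\tau,0)=1$.

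For the support claim, I would exhibit the candidate $w_i=z_i-\tau_R$ for $i\in R$ and $w_j=0$ for $j\notin R$, and then verify KKT with $\tau=\tau_R$. Positivity on $R$ follows from $\tau_R<\min_{i\in R}z_i$. The sum on $R$ equals $\sum_{i\in R}z_i-|R|\tau_R=1$ by the definition of $\tau_R$. Off $R$ we take $\mu_j=\tau_R-z_j\ge0$, which holds by $\max_{j\notin R}z_j\le\tau_R$, and complementary slackness is immediate. Uniqueness then gives that the support is exactly $R$, with the stated weights.

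The distractor claim reuses the same certificate. Appending candidates $z_{M+1},\dots$ with $z\le\tau_R$ leaves $\tau_R$ unchanged, since $\tau_R$ depends only on $R$, and condition \cref{eq:support_condition} still holds on the enlarged vector. So the enlarged projection again selects exactly $R$ with identical weights, and the new entries receive zero weight. Consequently $\boldsymbol\varepsilon=\sum_iw_i\mathbf p_i$ is unchanged.

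The non-expansiveness is the standard firm nonexpansiveness of Euclidean projection onto a closed convex set. Write $P=\rho_{\mathrm{proj}}$. The variational inequality $\langle \mathbf z-P\mathbf z,\mathbf y-P\mathbf z\rangle\le0$ holds for all $\mathbf y\in\Delta_M$. Apply it at $\mathbf y=P\mathbf z'$, apply the symmetric one for $\mathbf z'$ at $\mathbf y=P\mathbf z$, and add the two to get $\|P\mathbf z-P\mathbf z'\|^2\le\langle \mathbf z-\mathbf z',P\mathbf z-P\mathbf z'\rangle$. Cauchy--Schwarz then yields the bound. No step is a real obstacle; the only care needed is to invoke sufficiency of KKT, via convexity and affine constraints, so that exhibiting the certificate truly identifies the minimizer, and to note that appended entries do not change $\tau_R$.
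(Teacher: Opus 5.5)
Your proposal is correct and follows essentially the same route as the paper. The paper also uses the KKT thresholding form $w_i=[z_i-\tau]_+$ certified at $\tau=\tau_R$, together with uniqueness of the projection, both for exact support $R$ and for invariance under appended entries with similarity at most $\tau_R$, and it gets non-expansiveness by adding the two variational inequalities and applying Cauchy--Schwarz; your explicit multipliers $\mu_j=\tau_R-z_j$ and the remark on KKT sufficiency only spell out what the paper leaves implicit.
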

\noindent\emph{Proof sketch.}
The simplex KKT conditions give $w_i=[z_i-\tau]_+$ and $\sum_i[z_i-\tau]_+=1$. Under \cref{eq:support_condition}, $\tau_R$ activates exactly $R$; it remains a valid threshold after any below-threshold additions. Non-expansiveness follows because Euclidean projection onto a closed convex set is non-expansive. The full proof is in Appendix~\ref{sec:theory_proofs}.

\subsection{Sufficient Conditions for Frozen-Feature Separation}
For a nonzero reconstruction, define a generic local score, global score, and their convex fusion:
\begin{align}
S_{\mathrm{map}}(X)
&=\max_t\frac1{|L|}\sum_{l\in L}
\left(1-\mathrm{cos}(\mathbf q_{X,t}^l,\boldsymbol\varepsilon_{X,t}^l)\right),
\label{eq:generic_map_score}\\
S_{\mathrm{cls}}(X)
&=\frac1{|L|}\sum_{l\in L}
\left(1-\max_k\langle\mathbf c_X^l,\mathbf c_k^l\rangle\right),
\label{eq:cls_score}\\
S_{\mathrm{image}}(X)
&=\lambda S_{\mathrm{map}}(X)+(1-\lambda)S_{\mathrm{cls}}(X).
\label{eq:final_score}
\end{align}

\begin{theorem}[Sufficient conditions for training-free visual separation]
\label{thm:separation}
Consider a normal image $N$ and an anomalous image $A$. At every $l\in L$, suppose: (i) $w_{N,t,i}^l>0$ implies
$\|\mathbf q_{N,t}^l-\mathbf p_i^l\|_2\leq\epsilon_l<\sqrt2$;
(ii) one patch $t^\star$ of $A$ has a nonzero reconstruction and cosine similarity at most $1-\gamma_l$ to every nonzero $\mathbf v\in\operatorname{conv}(\mathcal P^l)$; and (iii)
$\min_k\|\mathbf c_N^l-\mathbf c_k^l\|_2\leq r_l$ and
$\min_k\|\mathbf c_A^l-\mathbf c_k^l\|_2\geq\delta_l$.
Then
\begin{equation}
S_{\mathrm{image}}(A)-S_{\mathrm{image}}(N)
\geq \lambda\Delta_{\mathrm{map}}+(1-\lambda)\Delta_{\mathrm{cls}},
\label{eq:fused_margin}
\end{equation}
where
$\Delta_{\mathrm{map}}=|L|^{-1}\sum_l(\gamma_l-\epsilon_l^2/2)$ and
$\Delta_{\mathrm{cls}}=(2|L|)^{-1}\sum_l(\delta_l^2-r_l^2)$.
Consequently, a positive right-hand side guarantees separation without target-task fitting or language input.
\end{theorem}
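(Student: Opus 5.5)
The plan is to bound the two branches separately and then combine them linearly. Since $S_{\mathrm{image}}$ in \cref{eq:final_score} is a convex combination with $\lambda\in[0,1]$, it suffices to prove two inequalities:
\[
S_{\mathrm{map}}(A)-S_{\mathrm{map}}(N)\geq\Delta_{\mathrm{map}}
\qquad\text{and}\qquad
S_{\mathrm{cls}}(A)-S_{\mathrm{cls}}(N)\geq\Delta_{\mathrm{cls}}.
\]
Multiplying the first by $\lambda$ and the second by $1-\lambda$, both nonnegative, and adding gives \cref{eq:fused_margin}.

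\textbf{Global branch.} This branch rests on one identity for unit vectors: $1-\langle\mathbf a,\mathbf b\rangle=\tfrac12\|\mathbf a-\mathbf b\|_2^2$. It gives
\[
1-\max_k\langle\mathbf c_X^l,\mathbf c_k^l\rangle=\tfrac12\min_k\|\mathbf c_X^l-\mathbf c_k^l\|_2^2 .
\]
Condition (iii) then bounds this quantity by $r_l^2/2$ from above for $N$ and by $\delta_l^2/2$ from below for $A$. Averaging over $l\in L$ yields exactly $\Delta_{\mathrm{cls}}$.

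\textbf{Local branch, anomalous image.} Since $S_{\mathrm{map}}(A)$ is a maximum over patches, we may lower bound it by its value at the patch $t^\star$ from condition (ii). The reconstruction $\boldsymbol\varepsilon_{A,t^\star}^l=\sum_i w_i\mathbf p_i^l$ with $\mathbf w\in\Delta_M$ lies in $\operatorname{conv}(\mathcal P^l)$, and it is nonzero by hypothesis. Hence $1-\cos\geq\gamma_l$ at each layer. Averaging gives $S_{\mathrm{map}}(A)\geq|L|^{-1}\sum_l\gamma_l$.

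\textbf{Local branch, normal image.} We need an upper bound that holds uniformly over every patch $t$; the maximum over $t$ then inherits it. First, using the same unit-vector identity and condition (i), which restricts attention to active indices,
\[
\langle\mathbf q_{N,t}^l,\boldsymbol\varepsilon_{N,t}^l\rangle=\sum_i w_i\bigl(1-\tfrac12\|\mathbf q_{N,t}^l-\mathbf p_i^l\|_2^2\bigr)\geq 1-\epsilon_l^2/2 .
\]
This lower bound is strictly positive because $\epsilon_l<\sqrt2$. In particular, the reconstruction is nonzero, so the cosine is well defined.

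Second, $\|\boldsymbol\varepsilon_{N,t}^l\|_2\leq1$, since $\boldsymbol\varepsilon_{N,t}^l$ is a convex combination of unit vectors. Dividing a positive number by a norm at most $1$ can only increase it, so
\[
\cos(\mathbf q_{N,t}^l,\boldsymbol\varepsilon_{N,t}^l)\geq\langle\mathbf q_{N,t}^l,\boldsymbol\varepsilon_{N,t}^l\rangle\geq 1-\epsilon_l^2/2 .
\]
Therefore $1-\cos\leq\epsilon_l^2/2$ for every $t$ and every $l$. Averaging over $l$ and taking the maximum over $t$ gives $S_{\mathrm{map}}(N)\leq|L|^{-1}\sum_l\epsilon_l^2/2$. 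Subtracting this from the bound for $A$ yields $\Delta_{\mathrm{map}}$.

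\textbf{Main obstacle.} The delicate step is the normalization in the bound for $N$. The cosine divides by $\|\boldsymbol\varepsilon\|_2$, which may be strictly smaller than $1$, and this division preserves the inequality only because the inner product is positive. This is precisely where the hypothesis $\epsilon_l<\sqrt2$ is used.
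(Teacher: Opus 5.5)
Your proposal is correct and follows the paper's proof step for step. It uses the unit-norm identity $1-\langle\mathbf a,\mathbf b\rangle=\tfrac12\|\mathbf a-\mathbf b\|_2^2$ for both branches, the convex-combination bound $\langle\mathbf q_{N,t}^l,\boldsymbol\varepsilon_{N,t}^l\rangle\geq1-\epsilon_l^2/2$ together with $\|\boldsymbol\varepsilon_{N,t}^l\|_2\leq1$ and positivity from $\epsilon_l<\sqrt2$, the lower bound of the maximum at $t^\star$ via (ii), and the convex fusion, exactly as the paper does (your explicit remark that this positivity also makes the normal reconstruction nonzero is a detail the paper leaves implicit).
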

\noindent\emph{Proof sketch.}
Simplex weights make each reconstruction a convex combination. For a normal patch, unit normalization and (i) give cosine similarity at least $1-\epsilon_l^2/2$; (ii) gives anomaly score at least $\gamma_l$. For global tokens,
$1-\langle\mathbf c,\mathbf c_k\rangle=\|\mathbf c-\mathbf c_k\|_2^2/2$.
Averaging over layers, max pooling, and convex fusion yield \cref{eq:fused_margin}; Appendix~\ref{sec:theory_proofs} gives the full proof.

The global coverage condition also exposes the role of shots. For a fixed normal token $\mathbf c_N^l$, if each of the $K$ i.i.d. supports falls within radius $r_l$ with probability at least $\alpha_l$, then
\begin{equation}
\Pr\!\left[\min_k\|\mathbf c_N^l-\mathbf c_k^l\|_2>r_l\right]
\leq(1-\alpha_l)^K\leq e^{-K\alpha_l}.
\label{eq:fewshot_coverage}
\end{equation}
These are sufficient rather than distribution-free conditions; the analysis does not assume that every frozen encoder satisfies the required margins.

\paragraph{From theory to design.}
The analysis yields three requirements. First, local normal evidence should be a convex reconstruction from compatible support patches. Second, increasing the patch memory calls for exact rejection of irrelevant entries; Proposition~\ref{prop:sparse_support} shows that the projection in \cref{eq:projection_retrieval}, known as \emph{sparsemax}~\cite{martins2016softmax}, is an adaptive and stable choice. Third, the additive margins in Theorem~\ref{thm:separation} motivate combining local spatial evidence with support-conditioned global evidence. These conclusions lead directly to Hyper-FSAD.

\section{Method}
\label{sec:method}

\begin{figure*}[t]
  \centering
  \includegraphics[width=\textwidth]{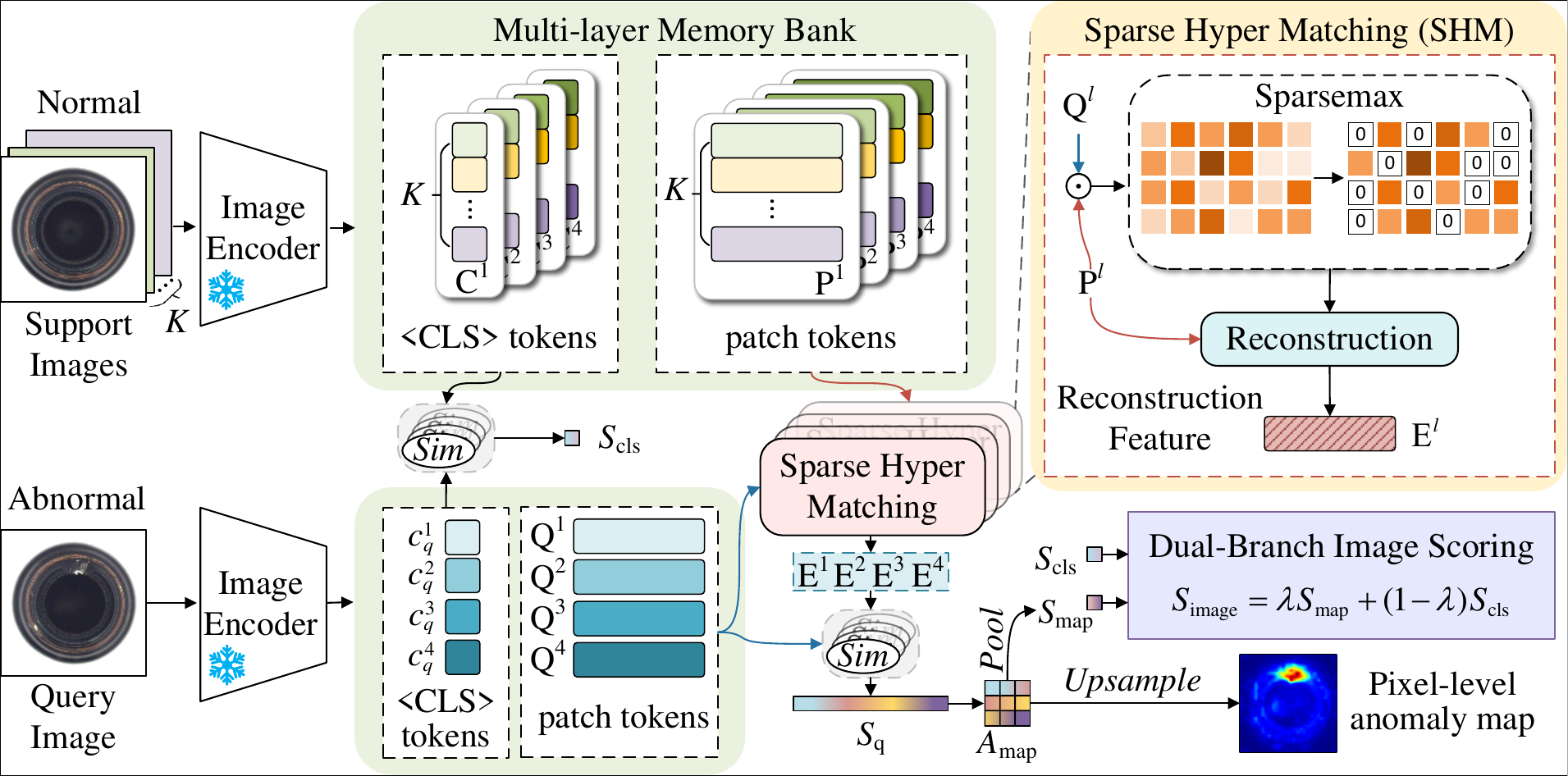}
    \caption{Overview of \textbf{Hyper-FSAD}. Given $K$ normal support images and a query image, a frozen visual encoder extracts multi-layer patch tokens and <CLS> tokens to construct a \emph{Multi-layer Memory Bank} of normal features. For each query patch at layer $l$, we perform \emph{Sparse Hyper Matching}: similarities to all support patches in $\mathbf{P}^l$ are converted into sparse fusion weights via \emph{sparsemax}, selecting a few informative neighbors to form a sparse reconstruction. The pixel-level anomaly map is obtained by the cosine distance between each query patch and its sparse reconstruction, aggregated across layers. \emph{Dual-Branch Image Scoring} further combines a pooled map-based score $S_{\mathrm{map}}$ from the \emph{Spatial Evidence Branch} and a <CLS>-based score $S_{\mathrm{cls}}$ from the \emph{Global Semantic Branch} into the final image-level anomaly score $S_{\mathrm{image}}$.}
  \label{fig:framework}
\end{figure*}

\subsection{Framework Overview}
\label{sec:overview}
Guided by Proposition~\ref{prop:sparse_support} and Theorem~\ref{thm:separation}, we instantiate the generic support-only family as \textbf{Hyper-FSAD}, illustrated in \cref{fig:framework}. Given $K$ normal support images and a query image, a frozen visual encoder extracts multi-layer patch and <CLS> tokens. We store both in a normal memory, instantiate $\rho_{\mathrm{proj}}$ as \textbf{Sparse Hyper Matching}, and implement the two margin terms in Theorem~\ref{thm:separation} through \textbf{Dual-Branch Image Scoring}. No parameters are fitted.

Every token entering a similarity computation is $\ell_2$-normalized, so inner products are cosine similarities. Here, \emph{language-free} means that target-task adaptation and inference require no text input, text encoder, prompt, or language supervision; the DINOv2/v3 instantiations additionally use visually pre-trained backbones.

\subsubsection{Multi-layer Memory Bank.}
We construct a multi-layer memory bank to store multi-layer features of normal samples, leveraging the strong representations of modern frozen visual encoders. Instead of relying on a single layer, which may miss multi-scale cues and thereby limit robustness, we extract features from four selected layers of the backbone, $L=\{l_1, l_2, l_3, l_4\}$, whose patch-token grid size and dimension are identical across layers. For each support image in the few-shot support set $\mathcal{S} = \{I_s^1, I_s^2, \dots, I_s^K\}$, where $K$ denotes the number of normal support images, we collect two distinct types of tokens: patch tokens, which preserve fine-grained local spatial information essential for pixel-level anomaly detection, and the class token <CLS>, which captures global semantic information suitable for image-level anomaly detection. These tokens are then stored in layer-specific memory banks to retain complementary cues across depths. Specifically, for each selected layer $l$, all patch tokens from the support images are concatenated to form $\mathbf{P}^l \in \mathbb{R}^{(K \cdot N_p)\times D}$, where $N_p$ denotes the total number of patch tokens of the feature map at layer $l$ and $D$ is the token dimension. Meanwhile, we store the <CLS> tokens as $\mathbf{C}^l \in \mathbb{R}^{K\times D}$ to represent global normality. This structure allows our model to perform precise matching at both spatial and global granularities without the need for additional training or labor-intensive hand-crafted text prompts, thus improving practicality in deployment.

\subsection{Sparse Hyper Matching}
\label{sec:hypermatch}
Proposition~\ref{prop:sparse_support} motivates instantiating the generic rule $\rho^l$ with the projection $\rho_{\mathrm{proj}}$, \ie, \emph{sparsemax}. For query patches $\mathbf Q^l\in\mathbb R^{N_p\times D}$, each row $\mathbf q^l$ is compared with the layer memory $\mathbf P^l$ to obtain
$\mathbf z^l=\mathbf q^l(\mathbf P^l)^\top\in\mathbb R^{1\times M}$. For completeness, the retrieval alternatives evaluated in our ablation are
\begin{equation}
\hat{\mathbf{w}}^{l}=
\begin{cases}
\mathrm{softmax}(\mathbf{z}^l) & \text{Dense Lookup}, \\
\mathrm{softmax}\!\big((\mathbf{z}^l)_{\text{top-}p}\big) & \text{Top-$p$ Lookup}, \\
\mathrm{sparsemax}(\mathbf{z}^l) & \text{Sparse Lookup}.
\end{cases}
\end{equation}

Dense Lookup weights every patch, whereas Top-$p$ retains only a fixed percentage of patches. Sparse Hyper Matching instead uses the closed-form threshold
$\hat w_u^l=[z_u^l-\tau]_+$ implied by \cref{eq:projection_retrieval}, so its active set is data-dependent and below-threshold patches receive exact zero weight. The computation is detailed in Appendix~\ref{sec:sparsemax_algorithm}.

Finally, $\boldsymbol\varepsilon^l=\hat{\mathbf w}^l\mathbf P^l$ is the sparse normal reconstruction. Applying this rule to every query patch gives $\mathbf E^l\in\mathbb R^{N_p\times D}$.

\subsection{Dual-Branch Image Scoring}
\label{sec:scoring}
The two additive margins in Theorem~\ref{thm:separation} motivate complementary local and global evidence. We implement them as a \emph{spatial evidence branch} based on sparse patch reconstruction and a \emph{global semantic branch} based on nearest-support <CLS> matching.

\textbf{Spatial Evidence Branch.}
Given a query image $I_q$, we extract its patch tokens $\mathbf{Q}^l \in \mathbb{R}^{N_p \times D}$
and the corresponding sparse reconstructions $\mathbf{E}^l \in \mathbb{R}^{N_p \times D}$ at each selected layer $l\in L$
using Sparse Hyper Matching. We compute the patch-wise anomaly score vector by averaging the cosine distance across layers:
\begin{equation}
\mathbf{S}_q = \frac{1}{|L|}\sum_{l\in L}\left(\mathbf{1} - \mathrm{cos}\!\left(\mathbf{Q}^l,\mathbf{E}^l\right)\right)\in\mathbb{R}^{N_p},
\label{eq:patch_score_vec}
\end{equation}
where $\mathrm{cos}(\mathbf{Q}^l,\mathbf{E}^l)$ denotes row-wise cosine similarity between corresponding rows of $\mathbf{Q}^l$ and $\mathbf{E}^l$.
We reshape $\mathbf{S}_q$ into a patch-grid anomaly map $A_{map}\in\mathbb{R}^{H_p\times W_p}$ with $H_pW_p=N_p$.

We further reduce $A_{map}$ to a scalar score through a pooling operator $f_{\mathrm{pool}}(\cdot)$, which can be instantiated as max, Top-$n$, or Top-$p$ pooling; unless otherwise stated, we use max pooling in all experiments (see Appendix~\ref{sec:pooling_appendix}), yielding map-based anomaly score $S_{\mathrm{map}} = f_{\mathrm{pool}}(A_{map})$.

\textbf{Global Semantic Branch.}
For each layer, we match the query <CLS> token $\mathbf c_q^l$ to the $K$ normal tokens in $\mathbf C^l$ and average their nearest-neighbor cosine distances, exactly implementing \cref{eq:cls_score}. Finally, \cref{eq:final_score} fuses $S_{\mathrm{map}}$ and $S_{\mathrm{cls}}$; we use $\lambda=0.5$ throughout.

\section{Experiments}
\label{sec:experiments}

\subsection{Experimental Settings}

\subsubsection{Datasets.}
Experiments are conducted on six real-world datasets spanning industrial inspection and medical imaging. We adopt four industrial anomaly detection benchmarks: MVTecAD~\cite{bergmann2019mvtec}, VisA~\cite{zou2022spot}, MPDD~\cite{jezek2021deep}, and BTAD~\cite{mishra2021vt}, together with two medical datasets, RESC~\cite{hu2019automated} for retinal lesion detection and BraTS~\cite{menze2014multimodal} for brain tumor analysis. Hyper-FSAD follows a strict \emph{no-fitting} protocol: for each test category, we use $K$ normal images as the support set and directly infer anomaly maps and image-level anomaly scores via memory-based matching, without auxiliary data, prompt/adaptor learning, or target-category parameter/subspace estimation. For fair comparison, we reproduce prior methods under their \emph{original training settings} (source-domain or target-domain training, including prompt/adaptor learning when applicable), following the configurations reported in their papers.

\subsubsection{Evaluation metrics.}
Industrial visual anomaly detection is evaluated using the area under the receiver operating characteristic (AUROC), the F1-score at the optimal threshold (F1), average precision (AP), and per-region overlap (PRO). For image-level anomaly classification, we report I-AUROC, I-F1, and I-AP, while pixel-level anomaly segmentation is assessed using P-AUROC, PRO, and AP. Due to space constraints, we present only I-AUROC and P-AUROC in the main paper, and report results for the remaining metrics in Appendix~\ref{sec:metrics_detail}. We report performance with $K$ set to 1, 2, and 4 normal support images.

\subsubsection{Implementation details.}
Since Hyper-FSAD is agnostic to the choice of frozen visual encoder, we instantiate it with three representative pre-trained backbones to demonstrate its generality: the visual encoder of CLIP ViT-L/14@336px~\cite{radford2021learning}, DINOv2 ViT-B/14~\cite{oquab2023dinov2}, and DINOv3 ViT-B/16~\cite{simeoni2025dinov3}. Unless otherwise specified, we adopt dinov3\_vitb16 as the default backbone. For the default backbone, all input images are resized to $448 \times 448$, and, following our multi-layer design, we extract features from transformer layers $L=\{3,6,9,12\}$; the other two backbones follow the same four-layer multi-scale design at their standard input resolutions. For every backbone, we then construct layer-wise memory banks by storing support-set patch tokens and <CLS> tokens for subsequent matching and scoring. Our method is strictly training-free: no auxiliary data, prompt/adaptor learning, or parameter optimization is performed on either source or target datasets; inference is carried out directly using the proposed Sparse Hyper Matching and Dual-Branch Image Scoring. All experiments are conducted on a single NVIDIA RTX 3090 GPU with 24\,GB memory, using a test batch size of 8.

\subsection{Comparison with State-of-the-art Methods}

\begin{table*}[t]
    \centering
    \caption{Comparison with existing state-of-the-art methods across six datasets under 1-shot/2-shot/4-shot settings. The best results are in \textbf{bold} and the second-best are \underline{underlined}. \textbf{I} denotes I-AUROC~(\%) and \textbf{P} denotes P-AUROC~(\%). $\dag$, $\ddag$, and $\ast$ denote our Hyper-FSAD using CLIP ViT-L/14@336px, DINOv2 ViT-B/14, and DINOv3 ViT-B/16 as the backbone, respectively. The $\ddag$ on SubspaceAD indicates that it is likewise based on the DINOv2 ViT-B/14 backbone. Results for Hyper-FSAD (ours) are reported as mean $\pm$ standard deviation over 5 runs.}
    \setlength{\tabcolsep}{8pt}%
    \renewcommand{\arraystretch}{0.85}%
    \resizebox{\textwidth}{!}{%
    \begin{tabular}{ll|l|cc|cc|cc|cc|cc|cc}
    \toprule
    & \multirow{2}{*}{Methods} & \multirow{2}{*}{Venue/Year} &
    \multicolumn{2}{c|}{MVTecAD} & \multicolumn{2}{c|}{VisA} & \multicolumn{2}{c|}{MPDD} &
    \multicolumn{2}{c|}{BTAD} & \multicolumn{2}{c|}{RESC} & \multicolumn{2}{c}{BraTS} \\
    \cmidrule(lr){4-5}\cmidrule(lr){6-7}\cmidrule(lr){8-9}\cmidrule(lr){10-11}\cmidrule(lr){12-13}\cmidrule(lr){14-15}
     & & & I & P & I & P & I & P & I & P & I & P & I & P \\
    \midrule
    \multirow{12}{*}{\rotatebox[origin=c]{90}{1-shot}} & RegAD & ECCV'22 & 73.3 & 92.3 & 69.3 & 93.3 & 47.9 & 93.2 & 84.4 & 95.6 & 55.9 & 84.6 & 58.4 & 91.3 \\
     & FastRecon & ICCV'23 & 92.0 & 95.1 & 81.0 & 96.6 & 76.5 & 96.2 & 93.7 & 96.8 & 82.8 & 96.0 & 76.2 & 95.4 \\
     & WinCLIP & CVPR'23 & 92.6 & 91.6 & 84.8 & 95.3 & 79.9 & 95.6 & 89.5 & 88.9 & 57.4 & 92.3 & 86.6 & 93.1 \\
     & APRIL-GAN & CVPR'23 & 91.1 & 91.2 & 87.1 & 95.9 & 75.1 & 94.9 & 86.5 & 93.0 & 77.3 & 93.0 & 86.8 & 90.9 \\
     & PromptAD & CVPR'24 & 93.0 & 95.2 & 85.2 & 97.2 & 79.3 & 96.0 & 93.4 & 96.1 & 87.4 & \textbf{96.4} & 81.7 & 95.9 \\
     & AnomalyGPT & AAAI'24 & 92.8 & 95.3 & 86.4 & 87.4 & 72.4 & 95.6 & 93.6 & 95.7 & 86.8 & 86.0 & 73.1 & 94.2 \\
     & ReMP-AD & ICCV'25 & 94.7 & 95.6 & 91.3 & 97.6 & 78.6 & 96.0 & 94.8 & 96.5 & 81.6 & 91.4 & 87.1 & 93.2 \\
     & KAG-prompt & AAAI'25 & 95.8 & 96.2 & 91.6 & 97.0 & 68.3 & 96.0 & 89.4 & 96.0 & 69.2 & 83.0 & 83.5 & \underline{96.5} \\
     & SubspaceAD$^\ddag$ & CVPR'26 & 93.3 & 97.2 & 93.2 & \textbf{98.0} & 77.8 & \textbf{97.7} & \underline{94.9} & 97.7 & 73.3 & 94.3 & \textbf{90.7} & \textbf{97.7} \\
     & Hyper-FSAD$^\dag$ & \multicolumn{1}{c|}{-} & 93.1\textsubscript{$\pm$0.4} & 93.4\textsubscript{$\pm$0.1} & 92.4\textsubscript{$\pm$0.7} & 95.0\textsubscript{$\pm$0.1} & \underline{82.8\textsubscript{$\pm$0.7}} & 96.5\textsubscript{$\pm$0.1} & 87.6\textsubscript{$\pm$1.5} & 96.9\textsubscript{$\pm$0.1} & \underline{89.5\textsubscript{$\pm$1.6}} & 90.8\textsubscript{$\pm$0.4} & \underline{89.9\textsubscript{$\pm$1.2}} & 89.3\textsubscript{$\pm$0.3} \\
     & Hyper-FSAD$^\ddag$ & \multicolumn{1}{c|}{-} & \underline{95.9\textsubscript{$\pm$0.1}} & \underline{97.4\textsubscript{$\pm$0.0}} & \underline{93.5\textsubscript{$\pm$0.1}} & 97.3\textsubscript{$\pm$0.1} & \textbf{83.0\textsubscript{$\pm$0.9}} & \underline{97.2\textsubscript{$\pm$0.2}} & \textbf{95.4\textsubscript{$\pm$1.1}} & \underline{97.8\textsubscript{$\pm$0.0}} & 84.7\textsubscript{$\pm$0.1} & 94.4\textsubscript{$\pm$0.2} & 82.0\textsubscript{$\pm$1.7} & 95.5\textsubscript{$\pm$0.2} \\
     & Hyper-FSAD$^\ast$ & \multicolumn{1}{c|}{-} & \textbf{96.3\textsubscript{$\pm$0.2}} & \textbf{97.7\textsubscript{$\pm$0.0}} & \textbf{94.0\textsubscript{$\pm$0.1}} & \underline{97.7\textsubscript{$\pm$0.1}} & 78.8\textsubscript{$\pm$1.5} & 96.4\textsubscript{$\pm$0.1} & \textbf{95.4\textsubscript{$\pm$1.2}} & \textbf{97.9\textsubscript{$\pm$0.0}} & \textbf{93.0\textsubscript{$\pm$1.4}} & \underline{96.3\textsubscript{$\pm$0.0}} & 82.2\textsubscript{$\pm$1.7} & 96.0\textsubscript{$\pm$0.2} \\
    \midrule
    \multirow{12}{*}{\rotatebox[origin=c]{90}{2-shot}} & RegAD & ECCV'22 & 76.6 & 94.5 & 70.4 & 94.3 & 52.5 & 94.0 & 88.9 & 96.9 & 59.4 & 85.9 & 57.4 & 92.7 \\
     & FastRecon & ICCV'23 & 94.2 & 95.5 & 81.1 & 96.6 & 81.8 & 96.5 & 93.8 & 97.2 & 87.6 & 96.2 & 75.8 & 95.2 \\
     & WinCLIP & CVPR'23 & 93.8 & 91.9 & 83.5 & 95.7 & 81.5 & 96.5 & 90.7 & 89.6 & 60.3 & 92.7 & 87.0 & 93.0 \\
     & APRIL-GAN & CVPR'23 & 90.1 & 91.6 & 86.6 & 96.1 & 75.1 & 95.1 & 86.1 & 93.2 & 78.3 & 93.4 & 87.5 & 90.9 \\
     & PromptAD & CVPR'24 & 95.4 & 95.6 & 85.1 & 97.7 & \underline{83.3} & 96.8 & 92.7 & 96.4 & \underline{89.2} & \textbf{96.7} & 83.0 & 95.8 \\
     & AnomalyGPT & AAAI'24 & 94.4 & 95.9 & 87.2 & 87.7 & 79.7 & 96.3 & 93.4 & 96.0 & 87.8 & 86.3 & 74.9 & 94.1 \\
     & ReMP-AD & ICCV'25 & 95.9 & 96.2 & 92.9 & 97.8 & 83.1 & 96.5 & 95.4 & 96.9 & 85.2 & 93.2 & 88.1 & 93.8 \\
     & KAG-prompt & AAAI'25 & \underline{96.6} & 96.5 & 92.7 & 97.4 & 69.4 & 96.1 & 89.7 & 96.0 & 77.4 & 82.9 & 85.1 & 96.6 \\
     & SubspaceAD$^\ddag$ & CVPR'26 & 94.7 & 97.5 & 93.7 & \textbf{98.3} & 79.4 & \textbf{97.7} & \textbf{96.9} & \underline{98.0} & 74.9 & 95.0 & \underline{89.1} & \textbf{98.0} \\
     & Hyper-FSAD$^\dag$ & \multicolumn{1}{c|}{-} & 93.6\textsubscript{$\pm$0.3} & 94.2\textsubscript{$\pm$0.1} & 94.5\textsubscript{$\pm$0.4} & 95.7\textsubscript{$\pm$0.1} & \textbf{83.9\textsubscript{$\pm$0.6}} & 96.6\textsubscript{$\pm$0.1} & 90.5\textsubscript{$\pm$0.7} & 97.1\textsubscript{$\pm$0.0} & 88.1\textsubscript{$\pm$0.7} & 90.9\textsubscript{$\pm$0.2} & \textbf{90.3\textsubscript{$\pm$1.4}} & 91.2\textsubscript{$\pm$0.6} \\
     & Hyper-FSAD$^\ddag$ & \multicolumn{1}{c|}{-} & 95.8\textsubscript{$\pm$0.2} & \underline{97.6\textsubscript{$\pm$0.1}} & \textbf{94.9\textsubscript{$\pm$0.1}} & 97.8\textsubscript{$\pm$0.1} & 82.2\textsubscript{$\pm$0.6} & \underline{97.1\textsubscript{$\pm$0.1}} & \underline{96.4\textsubscript{$\pm$0.5}} & \textbf{98.1\textsubscript{$\pm$0.1}} & 84.7\textsubscript{$\pm$0.1} & 95.0\textsubscript{$\pm$0.2} & 80.5\textsubscript{$\pm$1.7} & 96.5\textsubscript{$\pm$0.4} \\
     & Hyper-FSAD$^\ast$ & \multicolumn{1}{c|}{-} & \textbf{96.8\textsubscript{$\pm$0.0}} & \textbf{97.8\textsubscript{$\pm$0.1}} & \underline{94.8\textsubscript{$\pm$0.3}} & \underline{98.0\textsubscript{$\pm$0.1}} & 80.3\textsubscript{$\pm$0.5} & 96.4\textsubscript{$\pm$0.1} & 95.8\textsubscript{$\pm$0.7} & \underline{98.0\textsubscript{$\pm$0.0}} & \textbf{91.5\textsubscript{$\pm$1.7}} & \underline{96.5\textsubscript{$\pm$0.2}} & 83.2\textsubscript{$\pm$1.5} & \underline{96.9\textsubscript{$\pm$0.3}} \\
    \midrule
    \multirow{12}{*}{\rotatebox[origin=c]{90}{4-shot}} & RegAD & ECCV'22 & 83.4 & 95.7 & 72.0 & 94.7 & 61.1 & 94.9 & 91.3 & 97.3 & 64.2 & 87.9 & 63.3 & 93.8 \\
     & FastRecon & ICCV'23 & 96.2 & 96.3 & 84.4 & 97.0 & 81.9 & 96.9 & 94.4 & 97.4 & 87.5 & 95.8 & 78.6 & 96.1 \\
     & WinCLIP & CVPR'23 & 95.5 & 92.4 & 85.7 & 96.0 & 84.0 & 97.0 & 91.7 & 90.3 & 63.8 & 93.1 & 87.0 & 93.1 \\
     & APRIL-GAN & CVPR'23 & 91.0 & 92.2 & 87.2 & 96.2 & 76.5 & 95.3 & 86.1 & 93.3 & 78.3 & 93.7 & 88.0 & 91.3 \\
     & PromptAD & CVPR'24 & 95.9 & 96.0 & 87.5 & 97.9 & 84.0 & 97.3 & 92.6 & 96.6 & \textbf{90.2} & \textbf{96.8} & 86.4 & 96.6 \\
     & AnomalyGPT & AAAI'24 & \underline{97.0} & 96.4 & 91.4 & 96.5 & \textbf{85.9} & 96.7 & 93.5 & 96.2 & 88.5 & 86.7 & 79.4 & 95.4 \\
     & ReMP-AD & ICCV'25 & 96.8 & 96.6 & 94.1 & 97.8 & 80.0 & 97.0 & 96.0 & 97.0 & 86.6 & 93.6 & \underline{88.2} & 94.7 \\
     & KAG-prompt & AAAI'25 & \textbf{97.1} & 96.7 & 93.3 & 97.7 & 71.6 & 96.6 & 88.4 & 96.0 & 82.0 & 84.0 & 87.5 & \underline{96.7} \\
     & SubspaceAD$^\ddag$ & CVPR'26 & 95.5 & 97.7 & \underline{94.9} & \textbf{98.4} & 79.8 & \textbf{97.8} & \textbf{97.2} & \underline{98.1} & 82.4 & 95.7 & 88.1 & \textbf{97.4} \\
     & Hyper-FSAD$^\dag$ & \multicolumn{1}{c|}{-} & 94.0\textsubscript{$\pm$0.2} & 95.0\textsubscript{$\pm$0.1} & \underline{94.9\textsubscript{$\pm$0.2}} & 96.0\textsubscript{$\pm$0.0} & \underline{84.2\textsubscript{$\pm$0.8}} & 96.9\textsubscript{$\pm$0.1} & 91.0\textsubscript{$\pm$0.4} & 97.5\textsubscript{$\pm$0.1} & 88.7\textsubscript{$\pm$0.7} & 91.2\textsubscript{$\pm$0.2} & \textbf{90.1\textsubscript{$\pm$0.8}} & 91.6\textsubscript{$\pm$0.6} \\
     & Hyper-FSAD$^\ddag$ & \multicolumn{1}{c|}{-} & 96.1\textsubscript{$\pm$0.2} & \underline{97.8\textsubscript{$\pm$0.0}} & \textbf{95.5\textsubscript{$\pm$0.1}} & 98.1\textsubscript{$\pm$0.0} & 81.7\textsubscript{$\pm$0.5} & \underline{97.4\textsubscript{$\pm$0.1}} & \underline{96.8\textsubscript{$\pm$0.5}} & \textbf{98.2\textsubscript{$\pm$0.1}} & 84.6\textsubscript{$\pm$0.8} & 95.0\textsubscript{$\pm$0.3} & 78.5\textsubscript{$\pm$0.9} & 96.0\textsubscript{$\pm$0.4} \\
     & Hyper-FSAD$^\ast$ & \multicolumn{1}{c|}{-} & 96.9\textsubscript{$\pm$0.1} & \textbf{98.0\textsubscript{$\pm$0.0}} & 94.8\textsubscript{$\pm$0.2} & \underline{98.3\textsubscript{$\pm$0.0}} & 80.3\textsubscript{$\pm$0.7} & 96.5\textsubscript{$\pm$0.1} & 96.2\textsubscript{$\pm$0.6} & \textbf{98.2\textsubscript{$\pm$0.0}} & \underline{89.1\textsubscript{$\pm$1.2}} & \underline{95.9\textsubscript{$\pm$0.2}} & 80.5\textsubscript{$\pm$1.1} & 96.5\textsubscript{$\pm$0.6} \\
    \bottomrule
    \end{tabular}}
    \label{tab:auroc_results}
\end{table*}


\subsubsection{Competing Methods.}
We compare our proposed \textbf{Hyper-FSAD} with nine recent baselines, including RegAD~\cite{huang2022registration}, FastRecon~\cite{fang2023fastrecon}, WinCLIP~\cite{jeong2023winclip}, APRIL-GAN~\cite{chen2023april}, PromptAD~\cite{li2024promptad}, AnomalyGPT~\cite{gu2024anomalygpt}, ReMP-AD~\cite{ma2025remp}, KAG-prompt~\cite{tao2025kernel}, and SubspaceAD~\cite{lendering2026subspacead}. For a fair comparison, all methods are evaluated under identical 1-, 2-, and 4-shot settings on six datasets (MVTecAD~\cite{bergmann2019mvtec}, VisA~\cite{zou2022spot}, MPDD~\cite{jezek2021deep}, BTAD~\cite{mishra2021vt}, RESC~\cite{hu2019automated}, and BraTS~\cite{menze2014multimodal}), using the same sampled support sets, with each experiment repeated five times.

\subsubsection{Quantitative Comparison.}
As summarized in~\cref{tab:auroc_results}, we report image-level AUROC (I-AUROC) and pixel-level AUROC (P-AUROC) under 1/2/4-shot settings for Hyper-FSAD instantiated with three frozen backbones ($\dag$~CLIP ViT-L/14@336px, $\ddag$~DINOv2 ViT-B/14, and $\ast$~DINOv3 ViT-B/16).
Overall, Hyper-FSAD is highly competitive across all three backbones and consistently delivers strong \textit{average} results.
With the default DINOv3 ViT-B/16, Hyper-FSAD reaches average pixel-level AUROC values of 97.00\%/97.27\%/97.23\% for 1/2/4-shot, surpassing \textit{PromptAD} by \textbf{+0.87}, \textbf{+0.77}, and \textbf{+0.36} points, respectively.
For image-level detection, Hyper-FSAD also achieves the best average I-AUROC at every shot, reaching 89.95\%/90.40\% with DINOv3 under 1/2-shot and 90.48\% with CLIP under 4-shot, compared with 88.02\%/90.10\%/90.28\% for the strongest baseline (\textit{ReMP-AD}).
The aggregate trade-off in \cref{fig:1shot_accuracy_efficiency} further shows that the default Hyper-FSAD combines 89.95\%/97.00\% mean 1-shot I/P-AUROC with a 1-shot runtime of 52.6\,ms, lower than all compared baselines in \cref{tab:efficiency_comparison}.
Crucially, all three instantiations remain on par with or superior to prior training-based, language-guided methods, showing that the improvements stem from our Sparse Hyper Matching and Dual-Branch Image Scoring rather than from a particular backbone.
It is worth noting that some competing methods further require additional training or per-task adaptation when applied to new categories or datasets.
In contrast, Hyper-FSAD performs inference in a \emph{training-free} manner, enabling a simpler deployment pipeline while maintaining superior accuracy. Additional support images generally improve normal coverage, although the gains need not be monotonic for every dataset or metric.

\subsubsection{Qualitative Comparison.}
\cref{fig:qualitative} visualizes anomaly localization results under the 1-shot setting on both industrial and medical datasets.
Overall, \textbf{Hyper-FSAD} consistently produces noticeably more accurate anomaly maps with clearer boundaries and fewer false positives in normal regions, while preserving fine-grained abnormal structures.
Across these representative examples, Hyper-FSAD concentrates responses tightly within annotated anomalies while effectively suppressing background activations.
Additional Hyper-FSAD$^\ast$--GT comparisons in Appendix~\ref{sec:more_qualitative} further demonstrate reliable pixel-level localization across diverse anomaly appearances under limited normal references.

\begin{figure}[!b]
    \centering
    \vspace{-2pt}
    \includegraphics[width=\linewidth]{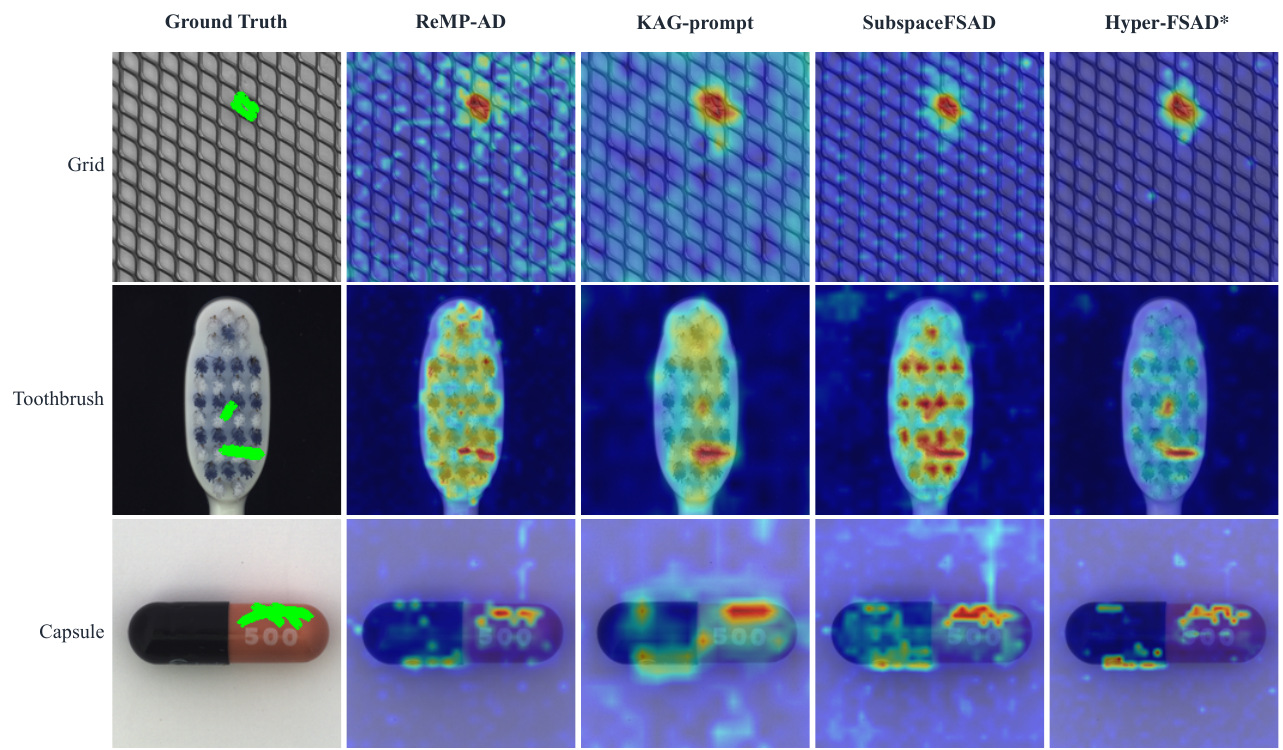}
    \caption{Qualitative comparison of 1-shot anomaly maps.}
    \label{fig:qualitative}
\end{figure}

\subsection{Ablation Study}
Unless otherwise specified, all ablation experiments in this subsection are conducted on MVTecAD~\cite{bergmann2019mvtec} and VisA~\cite{zou2022spot} under the 1/2/4-shot settings, using DINOv3 ViT-B/16 with $448\times448$ inputs. The input-resolution ablation instead varies the resolution from $224\times224$ to $672\times672$ while retaining the same backbone and few-shot protocol.

\subsubsection{The effects of Lookup Strategies.}
\cref{tab:lookup_ablation} compares sparse weighting variants on MVTecAD and VisA; results on MPDD, BTAD, RESC, and BraTS are provided in Appendix~\ref{sec:cross_dataset_lookup_ablation}. Sparsemax (our \emph{Sparse Hyper Matching}) achieves the strongest pixel-level AUROC across both datasets and all shots, together with the best or tied-best image-level results on MVTecAD and competitive results on VisA. Its rows are taken directly from the default DINOv3 ViT-B/16, $448\times448$ configuration in \cref{tab:auroc_results}. Softmax degrades sharply because dense weights mix abundant distractors into reconstruction; entmax provides only partial sparsity, while Top-$p$ remains sensitive to a manually chosen retention ratio. Sparsemax instead forms an adaptive active set with exact zeros, yielding robust localization without a sparsity hyperparameter.

\begin{table}[!t]
\centering
\caption{Patch-weighting ablation on MVTecAD (M) and VisA (V), using DINOv3 ViT-B/16 at $448\times448$. I/P denote I/P-AUROC (\%); sparsemax matches \cref{tab:auroc_results}.}
\label{tab:lookup_ablation}
\setlength{\tabcolsep}{3pt}
\renewcommand{\arraystretch}{0.95}
\small
\vspace{-1mm}
\resizebox{0.98\linewidth}{!}{%
\begin{tabular}{l|cc|cc|cc|cc|cc|cc}
\toprule
\multirow{2}{*}{Patch Weight}
& \multicolumn{2}{c|}{M1} & \multicolumn{2}{c|}{M2} & \multicolumn{2}{c|}{M4}
& \multicolumn{2}{c|}{V1} & \multicolumn{2}{c|}{V2} & \multicolumn{2}{c}{V4} \\
& I & P & I & P & I & P & I & P & I & P & I & P \\
\midrule
Top-$1$ & 93.4 & 95.7 & 94.6 & 95.9 & 95.1 & 96.4 & 92.2 & 96.7 & 92.5 & 96.9 & 92.8 & 96.9 \\
Top-$p$ $(p{=}1\%)$ & 92.8 & 94.3 & 93.6 & 95.1 & 94.1 & 95.4 & 90.1 & 96.6 & 91.7 & 97.3 & 91.8 & 97.8 \\
Softmax & 83.1 & 86.5 & 82.9 & 86.4 & 82.7 & 86.4 & 84.8 & 93.2 & 84.8 & 93.2 & 84.6 & 93.1 \\
Entmax $(\alpha{=}1.5)$ & \textbf{96.3} & 96.1 & 96.6 & 96.6 & 96.7 & 97.0 & \textbf{94.6} & 96.3 & \textbf{95.2} & 97.1 & \textbf{95.5} & 97.5 \\
Sparsemax (ours) & \textbf{96.3} & \textbf{97.7} & \textbf{96.8} & \textbf{97.8} & \textbf{96.9} & \textbf{98.0} & 94.0 & \textbf{97.7} & 94.8 & \textbf{98.0} & \textbf{94.8} & \textbf{98.3} \\
\bottomrule
\end{tabular}%
}
\end{table}

\begin{table}[t]
\centering
\caption{DINOv3 layer-selection analysis on MVTecAD (M)~\cite{bergmann2019mvtec} and VisA (V)~\cite{zou2022spot} at $448\times448$. I/P denote I/P-AUROC (\%); standard deviations are omitted.}
\label{tab:layer_selection_main}
\setlength{\tabcolsep}{3pt}
\renewcommand{\arraystretch}{1.08}
\small
\resizebox{\linewidth}{!}{%
\begin{tabular}{l|cc|cc|cc|cc|cc|cc}
\toprule
\multirow{2}{*}{Layers}
& \multicolumn{2}{c|}{M1} & \multicolumn{2}{c|}{M2} & \multicolumn{2}{c|}{M4}
& \multicolumn{2}{c|}{V1} & \multicolumn{2}{c|}{V2} & \multicolumn{2}{c}{V4} \\
& I & P & I & P & I & P & I & P & I & P & I & P \\
\midrule
$\{6\}$ & 93.5 & 96.2 & 94.0 & 96.6 & 94.4 & 97.0 & 89.7 & 96.2 & 91.1 & 96.7 & 91.2 & 97.1 \\
$\{12\}$ & 92.6 & 95.3 & 93.3 & 95.6 & 94.2 & 95.8 & 84.3 & 93.7 & 87.3 & 95.3 & 88.6 & 95.7 \\
$\{6,12\}$ & 94.4 & 97.4 & 95.2 & 97.5 & 95.5 & 97.7 & 91.2 & 97.3 & 92.9 & 97.9 & 92.9 & 98.1 \\
$\{3,6,9,12\}$ & \textbf{96.3} & \textbf{97.7} & \textbf{96.8} & \textbf{97.8} & \textbf{96.9} & \textbf{98.0} & \textbf{94.0} & \textbf{97.7} & \textbf{94.8} & \textbf{98.0} & \textbf{94.8} & \textbf{98.3} \\
\bottomrule
\end{tabular}%
}
\end{table}

\subsubsection{The effects of DINOv3 layer selection.}
\cref{tab:layer_selection_main} reports the layer-selection analysis on MVTecAD and VisA. A single intermediate or final layer is consistently inferior to combining multiple depths, while $\{6,12\}$ already recovers much of the performance. The default $\{3,6,9,12\}$ configuration gives the strongest overall localization and competitive image-level detection. This behavior is consistent with the complementary-layer ranking guarantee in Theorem~\ref{thm:layer_complementarity}, proved in Appendix~\ref{sec:layer_complementarity}: when layer-wise margins are positive on average and their fluctuations are not perfectly correlated, multi-layer aggregation yields a strictly tighter pairwise-misranking, and hence AUROC, certificate than a single layer. Results on MPDD, BTAD, RESC, and BraTS appear in Appendix~\ref{sec:cross_dataset_layer_selection}.

\subsubsection{The effects of input resolution.}
\label{sec:resolution_analysis}
We study the performance--efficiency trade-off across five input resolutions from $224\times224$ to $672\times672$ under the 1/2/4-shot settings. As shown in \cref{tab:resolution_ablation}, higher resolutions generally benefit both datasets, especially in P-AUROC, because finer spatial details are retained for patch-level matching. The improvement is not strictly monotonic, however, and largely saturates beyond $448\times448$. Under the 4-shot setting, increasing the resolution from $448\times448$ to $672\times672$ changes MVTecAD I/P-AUROC from 96.9\%/98.0\% to 96.7\%/98.1\% and VisA I/P-AUROC from 94.8\%/98.3\% to 94.8\%/98.5\%. Thus, the higher resolution yields only 0.1 and 0.2 percentage-point gains in P-AUROC, respectively, while MVTecAD I-AUROC decreases by 0.2 points and VisA I-AUROC remains unchanged. Meanwhile, runtime increases more than fourfold, from 59.3\,ms to 269.2\,ms. Similar saturation is observed in the 1- and 2-shot settings. Because runtime is determined by the shared backbone, resolution, and inference pipeline, the same values apply to both datasets. We therefore adopt $448\times448$ as the default, which preserves strong image- and pixel-level performance while avoiding the substantial overhead of higher resolutions.

\begin{table}[t]
\centering
\caption{Input-resolution ablation on MVTecAD (M) and VisA (V). I/P: I/P-AUROC (\%); T: shared runtime (ms). Best/second-best values are bolded/underlined.}
\label{tab:resolution_ablation}
\setlength{\tabcolsep}{4pt}
\renewcommand{\arraystretch}{0.82}
\small
\vspace{-1mm}
\resizebox{0.94\linewidth}{!}{%
\begin{tabular}{cl|cc|cc|c}
\toprule
Shot & Resolution & \multicolumn{2}{c|}{M} & \multicolumn{2}{c|}{V} & T \\
& & I & P & I & P & \\
\midrule
& $224\times224$ & 95.4 & 97.1 & 93.1 & 97.4 & \textbf{38.0} \\
& $336\times336$ & 96.0 & 97.4 & 93.5 & 97.7 & \underline{44.2} \\
1-shot & $448\times448$ & 96.3 & \textbf{97.7} & 94.0 & 97.7 & 52.6 \\
& $560\times560$ & \underline{96.6} & 97.5 & \underline{94.3} & \underline{98.0} & 88.6 \\
& $672\times672$ & \textbf{96.7} & \underline{97.6} & \textbf{94.5} & \textbf{98.2} & 165.8 \\
\midrule
& $224\times224$ & 95.8 & 97.4 & 94.0 & 97.8 & \textbf{38.2} \\
& $336\times336$ & \underline{96.3} & 97.7 & 94.4 & 97.8 & \underline{44.3} \\
2-shot & $448\times448$ & \textbf{96.8} & \underline{97.8} & \underline{94.8} & 98.0 & 53.1 \\
& $560\times560$ & \textbf{96.8} & \underline{97.8} & \underline{94.8} & \underline{98.2} & 91.4 \\
& $672\times672$ & \textbf{96.8} & \textbf{97.9} & \textbf{94.9} & \textbf{98.3} & 195.1 \\
\midrule
& $224\times224$ & 96.0 & 97.6 & 93.9 & 97.9 & \textbf{38.4} \\
& $336\times336$ & 96.4 & \underline{98.0} & 94.6 & 98.1 & \underline{44.6} \\
4-shot & $448\times448$ & \textbf{96.9} & \underline{98.0} & \underline{94.8} & \underline{98.3} & 59.3 \\
& $560\times560$ & \underline{96.7} & \textbf{98.1} & \textbf{94.9} & \textbf{98.5} & 106.4 \\
& $672\times672$ & \underline{96.7} & \textbf{98.1} & \underline{94.8} & \textbf{98.5} & 269.2 \\
\bottomrule
\end{tabular}%
}
\end{table}

\subsection{Efficiency Analysis}
\label{sec:efficiency}

We compare Hyper-FSAD with representative FSAD baselines in inference efficiency. \cref{tab:efficiency_comparison} reports GPU memory and per-image inference time under the 1-shot setting on a single NVIDIA RTX 3090 GPU with 24\,GB memory. Although several baselines use lower input resolutions, Hyper-FSAD achieves the lowest latency with low memory consumption, requiring only 0.89\,GB and 52.6\,ms per image. Despite its higher-resolution visual multi-layer pipeline, this efficiency follows from training-free feed-forward inference, which avoids costly prompt optimization, adapter learning, and iterative reconstruction.

\begin{table}[t]
\centering
\caption{Comparison of 1-shot inference efficiency on an NVIDIA RTX 3090.}
\label{tab:efficiency_comparison}
\setlength{\tabcolsep}{6pt}
\renewcommand{\arraystretch}{1.08}
\small
\resizebox{\linewidth}{!}{%
\begin{tabular}{lcccc}
\toprule
Method & Backbone & Resolution & GPU Cost (GB) & Time (ms) \\
\midrule
RegAD        & ResNet18        & $224 \times 224$ & 3.82  & 1517.1 \\
FastRecon    & Wide-ResNet50   & $224 \times 224$ & \textbf{0.55}  & 106.6 \\
WinCLIP      & ViT-L/14-336    & $336 \times 336$ & 15.70 & 8033.5 \\
APRIL-GAN    & ViT-L/14-336    & $336 \times 336$ & 2.05  & 204.6 \\
PromptAD     & ViT-L/14-336    & $336 \times 336$ & 2.10  & 80.1 \\
AnomalyGPT   & ImageBind-Huge  & $224 \times 224$ & 19.17 & 1653.2 \\
ReMP-AD      & ViT-L/14-336    & $336 \times 336$ & 1.48  & 118.8 \\
KAG-prompt   & ViT-L/14-336    & $336 \times 336$ & 3.37  & 100.6 \\
SubspaceAD   & DINOv2 ViT-B/14 & $448 \times 448$ & 1.01  & 86.9 \\
Hyper-FSAD    & DINOv3 ViT-B/16 & $448 \times 448$ & 0.89  & \textbf{52.6} \\
\bottomrule
\end{tabular}%
}
\end{table}

\section{Conclusion}
\label{sec:conclusion}
We presented \textbf{Hyper-FSAD}, a training- and language-free FSAD framework using a frozen visual encoder. Sparse Hyper Matching rejects distractors without fixed top-$p$, while Dual-Branch Image Scoring fuses spatial and global visual evidence. We prove exact below-threshold rejection and conditional few-shot separation. Across six industrial and medical benchmarks and three backbones, Hyper-FSAD delivers consistently strong results without task-specific parameter learning or prompts.

\section*{Acknowledgements}
This work was supported by the National Natural Science Foundation of China (No. 62576176). The computational resources were supported by the Supercomputing Center of Nankai University (NKSC).

\bibliography{main}

@String(CVPR  = {CVPR})

@String(ICCV  = {ICCV})

@String(ECCV  = {ECCV})

@String(ICML  = {ICML})

@String(ICLR  = {ICLR})

@String(AAAI  = {AAAI})

@String(CVPR  = {IEEE Conf. Comput. Vis. Pattern Recog.})

@String(ICCV  = {Int. Conf. Comput. Vis.})

@String(ECCV  = {Eur. Conf. Comput. Vis.})

@String(ICML  = {Int. Conf. Mach. Learn.})

@String(ICLR  = {Int. Conf. Learn. Represent.})

@String(ACL   = {Assoc. Comput. Linguistics})

@inproceedings{mahapatra2021medical,
  title     = {Medical image classification using generalized zero shot learning},
  author    = {Mahapatra, Dwarikanath and Bozorgtabar, Behzad and Ge, Zongyuan},
  booktitle = ICCV,
  pages     = {3344--3353},
  year      = {2021}
}

@article{menze2014multimodal,
  title     = {The multimodal brain tumor image segmentation benchmark (BRATS)},
  author    = {Menze, Bjoern H and Jakab, Andras and Bauer, Stefan and Kalpathy-Cramer, Jayashree and Farahani, Keyvan and Kirby, Justin and Burren, Yuliya and Porz, Nicole and Slotboom, Johannes and Wiest, Roland and others},
  journal   = {IEEE Trans. Med. Imaging},
  volume    = {34},
  number    = {10},
  pages     = {1993--2024},
  year      = {2014}
}

@article{qu2023investigating,
  title     = {Investigating shift equivalence of convolutional neural networks in industrial defect segmentation},
  author    = {Qu, Zhen and Tao, Xian and Shen, Fei and Zhang, Zhengtao and Li, Tao},
  journal   = {IEEE Trans. Instrumentation and Measurement},
  volume    = {72},
  pages     = {1--17},
  year      = {2023}
}

@article{chen2025center,
  title={Center-aware residual anomaly synthesis for multiclass industrial anomaly detection},
  author={Chen, Qiyu and Luo, Huiyuan and Yao, Haiming and Luo, Wei and Qu, Zhen and Lv, Chengkan and Zhang, Zhengtao},
  journal   = {IEEE Trans. Ind. Informatics},
  year={2025},
  publisher={IEEE}
}

@inproceedings{huang2022registration,
  title        = {Registration based few-shot anomaly detection},
  author       = {Huang, Chaoqin and Guan, Haoyan and Jiang, Aofan and Zhang, Ya and Spratling, Michael and Wang, Yan-Feng},
  booktitle    = ECCV,
  pages        = {303--319},
  year         = {2022}
}

@inproceedings{jeong2023winclip,
  title     = {Winclip: Zero-/few-shot anomaly classification and segmentation},
  author    = {Jeong, Jongheon and Zou, Yang and Kim, Taewan and Zhang, Dongqing and Ravichandran, Avinash and Dabeer, Onkar},
  booktitle = CVPR,
  pages     = {19606--19616},
  year      = {2023}
}

@inproceedings{fang2023fastrecon,
  title     = {Fastrecon: Few-shot industrial anomaly detection via fast feature reconstruction},
  author    = {Fang, Zheng and Wang, Xiaoyang and Li, Haocheng and Liu, Jiejie and Hu, Qiugui and Xiao, Jimin},
  booktitle = ICCV,
  pages     = {17481--17490},
  year      = {2023}
}

@inproceedings{li2024promptad,
  title     = {Promptad: Learning prompts with only normal samples for few-shot anomaly detection},
  author    = {Li, Xiaofan and Zhang, Zhizhong and Tan, Xin and Chen, Chengwei and Qu, Yanyun and Xie, Yuan and Ma, Lizhuang},
  booktitle = CVPR,
  pages     = {16838--16848},
  year      = {2024}
}

@inproceedings{zhou2024anomalyclip,
  title     = {AnomalyCLIP: Object-agnostic Prompt Learning for Zero-shot Anomaly Detection},
  author    = {Zhou, Qihang and Pang, Guansong and Tian, Yu and He, Shibo and Chen, Jiming},
  booktitle = ICLR,
  year      = {2024}
}

@inproceedings{ma2025remp,
  title     = {ReMP-AD: Retrieval-enhanced Multi-modal Prompt Fusion for Few-Shot Industrial Visual Anomaly Detection},
  author    = {Ma, Hongchi and Yang, Guanglei and Zhao, Debin and Ji, Yanli and Zuo, Wangmeng},
  booktitle = ICCV,
  pages     = {20425--20434},
  year      = {2025}
}

@inproceedings{tao2025kernel,
  title     = {Kernel-aware graph prompt learning for few-shot anomaly detection},
  author    = {Tao, Fenfang and Xie, Guo-Sen and Zhao, Fang and Shu, Xiangbo},
  booktitle = AAAI,
  pages     = {7347--7355},
  year      = {2025}
}

@inproceedings{lendering2026subspacead,
  title={SubspaceAD: Training-Free Few-Shot Anomaly Detection via Subspace Modeling},
  author={Lendering, Camile and Akdag, Erkut and Bondarev, Egor},
  booktitle=CVPR,
  pages={28557--28566},
  year={2026}
}

@inproceedings{peters2019sparse,
  title     = {Sparse sequence-to-sequence models},
  author    = {Peters, Ben and Niculae, Vlad and Martins, Andr{\'e} FT},
  booktitle = ACL,
  pages     = {1504--1519},
  year      = {2019}
}

@article{simeoni2025dinov3,
  title   = {Dinov3},
  author  = {Sim{\'e}oni, Oriane and Vo, Huy V and Seitzer, Maximilian and Baldassarre, Federico and Oquab, Maxime and Jose, Cijo and Khalidov, Vasil and Szafraniec, Marc and Yi, Seungeun and Ramamonjisoa, Micha{\"e}l and others},
  journal = {arXiv preprint arXiv:2508.10104},
  year    = {2025}
}

@article{guo2024impact,
  title   = {The impact of scanner domain shift on deep learning performance in medical imaging: an experimental study},
  author  = {Guo, Brian and Lu, Darui and Szumel, Gregory and Gui, Rongze and Wang, Tingyu and Konz, Nicholas and Mazurowski, Maciej A},
  journal = {arXiv preprint arXiv:2409.04368},
  year    = {2024}
}

@article{jsss-14-119-2025,
  author  = {Goodarzi, P. and Sch\"utze, A. and Schneider, T.},
  title   = {Domain shifts in industrial condition monitoring: a comparative analysis of automated machine learning models},
  journal = {Journal of Sensors and Sensor Systems},
  volume  = {14},
  year    = {2025},
  number  = {2},
  pages   = {119--132}
}

@article{wang2025industrial,
  title     = {Industrial vision inspection using digital twins: bridging CAD models and realistic scenarios},
  author    = {Wang, Fangjun and Wu, Jianhao and Yang, Zhouwang and Song, Yanzhi},
  journal   = {Journal of Intelligent Manufacturing},
  volume    = {36},
  number    = {7},
  pages     = {4963--4975},
  year      = {2025}
}

@article{zhang2026training,
  title={Is Task-Specific Training Necessary for Anomaly Detection?},
  author={Zhang, Xingwu and Li, Guanxuan and Henderson, Paul and Aragon-Camarasa, Gerardo and Long, Zijun},
  journal={arXiv preprint arXiv:2601.22763},
  year={2026}
}

@inproceedings{cao2024adaclip,
  title={Adaclip: Adapting clip with hybrid learnable prompts for zero-shot anomaly detection},
  author={Cao, Yunkang and Zhang, Jiangning and Frittoli, Luca and Cheng, Yuqi and Shen, Weiming and Boracchi, Giacomo},
  booktitle=ECCV,
  pages={55--72},
  year={2024}
}

@inproceedings{gu2024anomalygpt,
  title={Anomalygpt: Detecting industrial anomalies using large vision-language models},
  author={Gu, Zhaopeng and Zhu, Bingke and Zhu, Guibo and Chen, Yingying and Tang, Ming and Wang, Jinqiao},
  booktitle=AAAI,
  pages={1932--1940},
  year={2024}
}

@inproceedings{qu2024vcp,
  title={Vcp-clip: A visual context prompting model for zero-shot anomaly segmentation},
  author={Qu, Zhen and Tao, Xian and Prasad, Mukesh and Shen, Fei and Zhang, Zhengtao and Gong, Xinyi and Ding, Guiguang},
  booktitle=ECCV,
  pages={301--317},
  year={2024}
}

@inproceedings{martins2016softmax,
  title={From softmax to sparsemax: A sparse model of attention and multi-label classification},
  author={Martins, Andre and Astudillo, Ramon},
  booktitle=ICML,
  pages={1614--1623},
  year={2016}
}

@inproceedings{bergmann2019mvtec,
  title={MVTec AD--A comprehensive real-world dataset for unsupervised anomaly detection},
  author={Bergmann, Paul and Fauser, Michael and Sattlegger, David and Steger, Carsten},
  booktitle=CVPR,
  pages={9592--9600},
  year={2019}
}

@inproceedings{zou2022spot,
  title={Spot-the-difference self-supervised pre-training for anomaly detection and segmentation},
  author={Zou, Yang and Jeong, Jongheon and Pemula, Latha and Zhang, Dongqing and Dabeer, Onkar},
  booktitle=ECCV,
  pages={392--408},
  year={2022}
}

@inproceedings{jezek2021deep,
  title={Deep learning-based defect detection of metal parts: evaluating current methods in complex conditions},
  author={Jezek, Stepan and Jonak, Martin and Burget, Radim and Dvorak, Pavel and Skotak, Milos},
  booktitle={2021 13th International congress on ultra modern telecommunications and control systems and workshops (ICUMT)},
  pages={66--71},
  year={2021},
  organization={IEEE}
}

@inproceedings{mishra2021vt,
  title={VT-ADL: A Vision Transformer Network for Image Anomaly Detection and Localization},
  author={Mishra, P and Verk, R and Fornasier, D and Piciarelli, C and Foresti, GL and others},
  booktitle={IEEE International Symposium on Industrial Electronics},
  pages={01--06},
  year={2021}
}

@article{hu2019automated,
  title={Automated segmentation of macular edema in OCT using deep neural networks},
  author={Hu, Junjie and Chen, Yuanyuan and Yi, Zhang},
  journal={Medical image analysis},
  volume={55},
  pages={216--227},
  year={2019}
}

@article{chen2023april,
  title={April-gan: A zero-/few-shot anomaly classification and segmentation method for cvpr 2023 vand workshop challenge tracks 1\&2: 1st place on zero-shot ad and 4th place on few-shot ad},
  author={Chen, Xuhai and Han, Yue and Zhang, Jiangning},
  journal={arXiv preprint arXiv:2305.17382},
  year={2023}
}

@inproceedings{radford2021learning,
  title={Learning transferable visual models from natural language supervision},
  author={Radford, Alec and Kim, Jong Wook and Hallacy, Chris and Ramesh, Aditya and Goh, Gabriel and Agarwal, Sandhini and Sastry, Girish and Askell, Amanda and Mishkin, Pamela and Clark, Jack and others},
  booktitle=ICML,
  pages={8748--8763},
  year={2021}
}

@article{oquab2023dinov2,
  title={Dinov2: Learning robust visual features without supervision},
  author={Oquab, Maxime and Darcet, Timoth{\'e}e and Moutakanni, Th{\'e}o and Vo, Huy and Szafraniec, Marc and Khalidov, Vasil and Fernandez, Pierre and Haziza, Daniel and Massa, Francisco and El-Nouby, Alaaeldin and others},
  journal={arXiv preprint arXiv:2304.07193},
  year={2023}
}

\clearpage
\appendix
\noindent\rule{\linewidth}{0.5pt}
\section*{Appendix Overview}
This appendix provides supplementary details supporting the main manuscript, organized as follows:

\begin{itemize}
    \setlength{\itemsep}{2pt}
    \setlength{\parskip}{0pt}
    \setlength{\parsep}{0pt}
    \item[--] \textbf{Sec.~A} provides proofs and additional theoretical details.
    \item[--] \textbf{Sec.~B} reports additional metrics and cross-dataset patch-weighting and layer-selection ablations.
    \item[--] \textbf{Sec.~C} studies the effect of the fusion strategy.
    \item[--] \textbf{Sec.~D} studies the effect of the pooling operator.
    \item[--] \textbf{Sec.~E} provides additional qualitative results.
\end{itemize}

\setcounter{secnumdepth}{2}
\section{Proofs and Additional Theoretical Details}
\label{sec:theory_proofs}

\subsection{Proof of Proposition~\ref{prop:sparse_support}}
For clarity, we use column vectors in the proofs and omit the layer index. Let $\Delta_M=\{\mathbf w\in\mathbb R^M:\mathbf w\geq\mathbf0,\mathbf1^\top\mathbf w=1\}$ and $\Pi_{\Delta_M}$ denote Euclidean projection onto this simplex.

\begin{proof}
The KKT conditions for
$\min_{\mathbf w\in\Delta_M}\frac12\|\mathbf w-\mathbf z\|_2^2$
give the unique solution
\begin{equation}
w_i=[z_i-\tau]_+,
\qquad \sum_{i=1}^{M}[z_i-\tau]_+=1,
\label{eq:kkt_sparsemax}
\end{equation}
where $[a]_+=\max(a,0)$. Under \cref{eq:support_condition}, setting $\tau=\tau_R$ makes the active coordinates exactly $R$, and
$\sum_{i\in R}(z_i-\tau_R)=1$ by definition. This proves the support and weight formula.

Now append similarities $\tilde z_j\leq\tau_R$. The same threshold satisfies
\begin{equation}
\sum_{i=1}^{M}[z_i-\tau_R]_+
+\sum_j[\tilde z_j-\tau_R]_+=1+0=1.
\end{equation}
Hence \cref{eq:kkt_sparsemax} gives the extended solution
$[\mathbf w;\mathbf0]$. Since the simplex projection is unique, neither the original weights nor the reconstruction $\mathbf P^\top\mathbf w$ changes.

It remains to prove stability. A projection onto any nonempty closed convex set is non-expansive. In particular, the variational inequalities for
$\mathbf w=\Pi_{\Delta_M}(\mathbf z)$ and
$\mathbf w'=\Pi_{\Delta_M}(\mathbf z')$ imply
\begin{equation}
\|\mathbf w-\mathbf w'\|_2^2
\leq \langle\mathbf w-\mathbf w',\mathbf z-\mathbf z'\rangle
\leq \|\mathbf w-\mathbf w'\|_2\|\mathbf z-\mathbf z'\|_2.
\end{equation}
Canceling the common factor proves the non-expansiveness claim. Since
$\mathbf z=\mathbf P\mathbf q$ and
$\boldsymbol\varepsilon(\mathbf q)=\mathbf P^\top\Pi_{\Delta_M}(\mathbf P\mathbf q)$,
submultiplicativity further gives the reconstruction bound
\begin{align}
\|\boldsymbol\varepsilon(\mathbf q)-\boldsymbol\varepsilon(\mathbf q')\|_2
&\leq \|\mathbf P\|_2
\|\Pi_{\Delta_M}(\mathbf P\mathbf q)-\Pi_{\Delta_M}(\mathbf P\mathbf q')\|_2\\
&\leq\|\mathbf P\|_2^2\|\mathbf q-\mathbf q'\|_2,
\label{eq:reconstruction_stability}
\end{align}
which holds even when the active set changes.
\end{proof}

The invariance is specific to sufficiently low-scoring additions. For comparison, if $J$ new patches all have similarity $a$, dense softmax assigns them total mass
\begin{equation}
\frac{J e^a}{\sum_{i=1}^{M}e^{z_i}+J e^a}\longrightarrow1
\quad\text{as }J\to\infty,
\end{equation}
whereas sparsemax remains exactly unchanged whenever $a\leq\tau_R$.

\paragraph{Local form on a fixed active set.}
When the active set $R$ does not change, let
$\mathbf H_R=\mathbf I-|R|^{-1}\mathbf1\mathbf1^\top$ and let $\mathbf P_R$ contain its selected support rows. Direct substitution of the threshold gives
\begin{align}
\boldsymbol\varepsilon(\mathbf q)
&=\bar{\mathbf p}_R+\mathbf G_R\mathbf q,\\
\mathbf G_R
&=\mathbf P_R^\top\mathbf H_R\mathbf P_R\\
&=\sum_{i\in R}(\mathbf p_i-\bar{\mathbf p}_R)
(\mathbf p_i-\bar{\mathbf p}_R)^\top,
\label{eq:local_sparse_gain}
\end{align}
where $\bar{\mathbf p}_R=|R|^{-1}\sum_{i\in R}\mathbf p_i$. Thus the local reconstruction gain is governed by the dispersion $\|\mathbf G_R\|_2$ of the \emph{selected} normal evidence, rather than directly by the total memory size.

\subsection{Proof of Theorem~\ref{thm:separation}}
\begin{proof}
All tokens are unit-normalized. For a normal query patch and every active support patch,
\begin{equation}
\langle\mathbf q_N^l,\mathbf p_i^l\rangle
=1-\tfrac12\|\mathbf q_N^l-\mathbf p_i^l\|_2^2
\geq1-\tfrac12\epsilon_l^2.
\end{equation}
Because the generic retrieval weights lie in $\Delta_M$, the reconstruction is a convex combination:
$\boldsymbol\varepsilon_N^l=\sum_i w_i^l\mathbf p_i^l$. Therefore,
\begin{equation}
\langle\mathbf q_N^l,\boldsymbol\varepsilon_N^l\rangle
\geq1-\tfrac12\epsilon_l^2,
\qquad
\|\boldsymbol\varepsilon_N^l\|_2\leq1.
\end{equation}
The first quantity is positive because $\epsilon_l<\sqrt2$, and hence
$\mathrm{cos}(\mathbf q_N^l,\boldsymbol\varepsilon_N^l)
\geq1-\epsilon_l^2/2$. Every patch of $N$ consequently has layer-averaged score at most
$|L|^{-1}\sum_l\epsilon_l^2/2$, so max pooling preserves this upper bound.

For the specified anomalous patch, each reconstruction lies in
$\operatorname{conv}(\mathcal P^l)$. Assumption (ii) of Theorem~\ref{thm:separation} therefore gives a layer score of at least $\gamma_l$. Averaging across layers and then taking the maximum over patches gives
\begin{equation}
S_{\mathrm{map}}(A)-S_{\mathrm{map}}(N)
\geq\frac1{|L|}\sum_{l\in L}\left(\gamma_l-\frac{\epsilon_l^2}{2}\right)
=\Delta_{\mathrm{map}}.
\end{equation}

For unit <CLS> tokens,
\begin{equation}
1-\langle\mathbf c,\mathbf c_k\rangle
=\tfrac12\|\mathbf c-\mathbf c_k\|_2^2.
\label{eq:cosine_distance_identity}
\end{equation}
The two <CLS> distance assumptions thus imply
\begin{equation}
S_{\mathrm{cls}}(A)-S_{\mathrm{cls}}(N)
\geq\frac1{2|L|}\sum_{l\in L}(\delta_l^2-r_l^2)
=\Delta_{\mathrm{cls}}.
\end{equation}
Applying the convex fusion in \cref{eq:final_score} proves \cref{eq:fused_margin}. If its right-hand side is positive, the anomalous score is strictly larger than the normal score, so a threshold between them separates the pair. If the assumptions hold uniformly for two populations, the same threshold separates those populations.
\end{proof}

\subsection{Complementarity of Multi-Layer Evidence}
\label{sec:layer_complementarity}
For compactness, define the layer-wise patch and global discrepancies
\begin{equation}
d_{X,t}^l=1-\mathrm{cos}(\mathbf q_{X,t}^l,\boldsymbol\varepsilon_{X,t}^l),
\qquad
g_X^l=1-\max_k\langle\mathbf c_X^l,\mathbf c_k^l\rangle.
\label{eq:layer_discrepancies}
\end{equation}
We write $S_{\mathrm{image}}^L$ for the score in \cref{eq:final_score} computed with layer set $L$.

\begin{theorem}[Complementary-layer ranking guarantee]
\label{thm:layer_complementarity}
Let $m=|L|>1$ and consider a normal--anomalous pair $(N,A)$ with a common anomalous patch $t^\star$ across the selected feature grids. Define the layer-wise ranking certificate
\begin{equation}
\eta_l=\lambda\!\left(d_{A,t^\star}^l-\max_t d_{N,t}^l\right)
 +(1-\lambda)(g_A^l-g_N^l).
\label{eq:layer_margin_certificate}
\end{equation}
Then
\begin{equation}
S_{\mathrm{image}}^L(A)-S_{\mathrm{image}}^L(N)
\geq \bar\eta_L:=\frac1m\sum_{l\in L}\eta_l.
\label{eq:multilayer_margin_certificate}
\end{equation}
Under the assumptions of Theorem~\ref{thm:separation}, each certificate further satisfies
\begin{equation}
\eta_l\geq b_l:=\lambda\!\left(\gamma_l-\frac{\epsilon_l^2}{2}\right)
 +\frac{1-\lambda}{2}(\delta_l^2-r_l^2).
\label{eq:layer_margin_lower_bound}
\end{equation}
Thus $\sum_l b_l>0$ certifies multi-layer separation, even if some individual $b_l$ are non-positive.

Now let the support set and normal--anomalous pair be random. Suppose that, for every $l\in L$,
$\mathbb E[\eta_l]\geq\mu>0$,
$\operatorname{Var}(\eta_l)\leq\sigma^2$, and
$\operatorname{Cov}(\eta_l,\eta_r)\leq\rho\sigma^2$ for $l\neq r$, where $0\leq\rho<1$. Then
\begin{equation}
\Pr\!\left[S_{\mathrm{image}}^L(A)\leq S_{\mathrm{image}}^L(N)\right]
\leq
\frac{\sigma^2 c_m}{\sigma^2 c_m+\mu^2},
\label{eq:correlated_layer_bound}
\end{equation}
where $c_m=[1+(m-1)\rho]/m<1$.
For $\sigma>0$, this is strictly tighter than the corresponding uniform single-layer certificate $\sigma^2/(\sigma^2+\mu^2)$.
\end{theorem}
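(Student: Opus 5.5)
The proof has three parts, taken in order: the deterministic certificate, its lower bound, and the probabilistic bound.

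\textbf{Deterministic certificate.} Write $S_{\mathrm{map}}^L(X)=\max_t\frac1m\sum_l d_{X,t}^l$ and $S_{\mathrm{cls}}^L(X)=\frac1m\sum_l g_X^l$. For $A$, bound the maximum from below by its value at the common patch $t^\star$, which gives $S_{\mathrm{map}}^L(A)\geq\frac1m\sum_l d_{A,t^\star}^l$. For $N$, use that a max of averages is at most the average of maxima, so $S_{\mathrm{map}}^L(N)\leq\frac1m\sum_l\max_t d_{N,t}^l$. Subtracting these, adding the exact difference of the $S_{\mathrm{cls}}$ terms, and applying the convex fusion in \cref{eq:final_score} gives $S_{\mathrm{image}}^L(A)-S_{\mathrm{image}}^L(N)\geq\frac1m\sum_l\eta_l=\bar\eta_L$, which is \cref{eq:multilayer_margin_certificate}.

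\textbf{Lower bound \cref{eq:layer_margin_lower_bound}.} This reuses the per-layer estimates from the proof of Theorem~\ref{thm:separation} without averaging over layers. Condition (i) gives $d_{N,t}^l\leq\epsilon_l^2/2$ for every $t$, so $\max_t d_{N,t}^l\leq\epsilon_l^2/2$. Condition (ii) gives $d_{A,t^\star}^l\geq\gamma_l$, since the reconstruction lies in $\operatorname{conv}(\mathcal P^l)$. Condition (iii), through the identity \cref{eq:cosine_distance_identity}, gives $g_A^l-g_N^l\geq(\delta_l^2-r_l^2)/2$. Substituting these into \cref{eq:layer_margin_certificate} yields $\eta_l\geq b_l$. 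Then $\bar\eta_L\geq\frac1m\sum_l b_l>0$, so positivity of the sum is enough even if some individual $b_l$ are non-positive.

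\textbf{Probabilistic bound.} By the certificate, the event $S_{\mathrm{image}}^L(A)\leq S_{\mathrm{image}}^L(N)$ is contained in $\{\bar\eta_L\leq0\}$. Set $Y=\bar\eta_L$. Then $\mathbb E[Y]\geq\mu$, and expanding the variance of the average gives $\operatorname{Var}(Y)\leq\frac1{m^2}\bigl(m\sigma^2+m(m-1)\rho\sigma^2\bigr)=\sigma^2c_m$. Apply Cantelli's one-sided inequality:
\[
\Pr[Y\leq0]\leq\Pr\bigl[Y-\mathbb E Y\leq-\mathbb E Y\bigr]\leq\frac{v}{v+(\mathbb E Y)^2},
\]
with $v=\operatorname{Var}(Y)$. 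The right-hand side increases in $v$ and decreases in $\mathbb E Y$, so substituting the bounds $v\leq\sigma^2c_m$ and $\mathbb EY\geq\mu$ gives \cref{eq:correlated_layer_bound}. Finally, $c_m<1$ is equivalent to $(m-1)\rho<m-1$, which holds because $m>1$ and $\rho<1$. The map $x\mapsto x/(x+\mu^2)$ is strictly increasing for $\mu>0$, so when $\sigma>0$ the multi-layer bound is strictly below the single-layer bound $\sigma^2/(\sigma^2+\mu^2)$.

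The step needing the most care is the direction of the max-versus-average inequality in the map branch. The key observation is that using the common patch $t^\star$ for $A$ and per-layer maxima for $N$ makes both inequalities point the right way. The Cantelli step is routine once the variance of the average is bounded, including the monotonicity substitution.
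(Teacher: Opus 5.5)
Your proposal is correct and follows the paper's proof essentially step for step. It uses the same two map-branch bounds (evaluating $A$ at $t^\star$, and bounding a max of averages by the average of per-layer maxima for $N$), the same reuse of the per-layer estimates from Theorem~\ref{thm:separation} to get $\eta_l\geq b_l$, and the same Cantelli argument with the covariance bound $\operatorname{Var}(\bar\eta_L)\leq\sigma^2c_m$; your explicit remark on the monotonicity of $v/(v+(\mathbb E Y)^2)$ in both arguments only spells out a substitution the paper leaves implicit.
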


\begin{proof}
Using \cref{eq:generic_map_score}, the order of averaging and max pooling gives
\begin{align}
S_{\mathrm{map}}^L(A)
&\geq \frac1m\sum_{l\in L}d_{A,t^\star}^l,
\label{eq:anomaly_map_lower}\\
S_{\mathrm{map}}^L(N)
&=\max_t\frac1m\sum_{l\in L}d_{N,t}^l
 \leq\frac1m\sum_{l\in L}\max_t d_{N,t}^l.
\label{eq:normal_map_upper}
\end{align}
Moreover, \cref{eq:cls_score} gives
$S_{\mathrm{cls}}^L(A)-S_{\mathrm{cls}}^L(N)
=m^{-1}\sum_l(g_A^l-g_N^l)$.
Applying the convex fusion in \cref{eq:final_score} proves \cref{eq:multilayer_margin_certificate}. The patch and global bounds established in the proof of Theorem~\ref{thm:separation} give
\begin{align}
d_{A,t^\star}^l-\max_t d_{N,t}^l
&\geq\gamma_l-\epsilon_l^2/2,\nonumber\\
g_A^l-g_N^l&\geq(\delta_l^2-r_l^2)/2,
\end{align}
which proves \cref{eq:layer_margin_lower_bound}.

It remains to establish the ranking guarantee. Let
$\bar\mu_L=\mathbb E[\bar\eta_L]$ and
$v_L=\operatorname{Var}(\bar\eta_L)$.
Because the actual score gap is lower-bounded by $\bar\eta_L$, a misranking implies $\bar\eta_L\leq0$. Cantelli's inequality therefore yields
\begin{equation}
\Pr(\bar\eta_L\leq0)
=\Pr(\bar\eta_L-\bar\mu_L\leq-\bar\mu_L)
\leq\frac{v_L}{v_L+\bar\mu_L^2}.
\label{eq:cantelli_layer_margin}
\end{equation}
Under the stated covariance conditions,
\begin{align}
v_L
&=\frac1{m^2}\!\left(\sum_l\operatorname{Var}(\eta_l)
 +2\!\sum_{l<r}\operatorname{Cov}(\eta_l,\eta_r)\right)\\
&\leq\frac{\sigma^2}{m}\bigl(1+(m-1)\rho\bigr)
=\sigma^2c_m.
\end{align}
Together with $\bar\mu_L\geq\mu$, this proves \cref{eq:correlated_layer_bound}. Since $m>1$ and $\rho<1$, we have $c_m<1$, so its right-hand side is strictly smaller than the uniform single-layer bound whenever $\sigma>0$.
\end{proof}

Under the standard pairwise-ranking interpretation of AUROC, \cref{eq:cantelli_layer_margin} also gives
$\mathrm{AUROC}_L\geq1-v_L/(v_L+\bar\mu_L^2)$ when ties are counted pessimistically. For pre-interpolation pixel scores, the same derivation applies to
$\eta_l^{\mathrm{patch}}=d_{A,t_A}^l-d_{N,t_N}^l$.
The result is conditional rather than a claim that adding an arbitrary layer must help: a strongly negative mean margin can reduce $\bar\mu_L$, while perfectly correlated fluctuations ($\rho=1$) provide no concentration gain. It therefore attributes the improvement to complementary evidence across depths, rather than to layer count alone.

\subsection{Few-Shot Coverage and Stability of the Global Branch}
For a fixed normal token $\mathbf c_N^l$, let $\mathbf c_{\mathrm{sup}}^l$
denote one independently sampled normal support token and define
$p_l=\Pr(\|\mathbf c_{\mathrm{sup}}^l-\mathbf c_N^l\|_2\leq r_l\mid\mathbf c_N^l)\geq\alpha_l$.
Independence of the $K$ sampled normal supports gives
\begin{align}
&\Pr\!\left[\min_k\|\mathbf c_N^l-\mathbf c_k^l\|_2>r_l\right]\\
&\hspace{12mm}=(1-p_l)^K\leq(1-\alpha_l)^K\leq e^{-K\alpha_l},
\end{align}
which proves \cref{eq:fewshot_coverage}. A union bound shows that all selected layers satisfy their respective coverage conditions with probability at least
$1-\sum_{l\in L}e^{-K\alpha_l}$; no independence across layers is required for this final step.

The global score also has a direct stability property. For a fixed unit-normalized memory $\mathcal C=\{\mathbf c_k\}_{k=1}^K$, define
$s_{\mathcal C}(\mathbf c)=1-\max_k\langle\mathbf c,\mathbf c_k\rangle$.
It is zero on every stored token, and for any unit $\mathbf u,\mathbf v$,
\begin{align}
|s_{\mathcal C}(\mathbf u)-s_{\mathcal C}(\mathbf v)|
&=\left|\max_k\langle\mathbf v,\mathbf c_k\rangle-
\max_k\langle\mathbf u,\mathbf c_k\rangle\right|\\
&\leq\max_k|\langle\mathbf u-\mathbf v,\mathbf c_k\rangle|
\leq\|\mathbf u-\mathbf v\|_2.
\end{align}
Thus each layer's implemented cosine score is $1$-Lipschitz on the unit sphere. By \cref{eq:cosine_distance_identity},
$\sqrt{2s_{\mathcal C}(\mathbf c)}=\min_k\|\mathbf c-\mathbf c_k\|_2$ is exactly the distance to the fixed normal <CLS> memory. The maximal stable-score result for distance-to-memory~\cite{zhang2026training} therefore applies to this square-root distance, but not to the sparse patch-reconstruction score.

\subsection{Sparsemax Computation}
\label{sec:sparsemax_algorithm}
\cref{alg:spm} gives the closed-form computation used by Sparse Hyper Matching. The permutation is explicitly inverted before returning the weights so that they remain aligned with the original memory-bank rows.

\begin{algorithm}[ht]
\caption{Sparsemax with Original-Index Recovery}
\label{alg:spm}
\begin{algorithmic}[1]
\Require Similarities $\mathbf z^l\in\mathbb R^M$, where $M=KN_p$
\Ensure Sparse weights $\hat{\mathbf w}^l\in\mathbb R^M$
\State Sort and record $\pi$: $z^l_{\pi(1)}\geq\cdots\geq z^l_{\pi(M)}$
\State $m\gets\max\{j:1+jz^l_{\pi(j)}>\sum_{u=1}^{j}z^l_{\pi(u)}\}$
\State $\tau\gets\big(\sum_{u=1}^{m}z^l_{\pi(u)}-1\big)/m$
\For{$j=1,\ldots,M$}
    \State $\hat w^l_{\pi(j)}\gets\max(0,z^l_{\pi(j)}-\tau)$
\EndFor
\State \Return $\hat{\mathbf w}^l$
\end{algorithmic}
\end{algorithm}

\section{Detailed Results on Additional Metrics}
\label{sec:metrics_detail}

In this section, we report detailed quantitative results on additional evaluation metrics beyond the AUROC results presented in the main paper. Specifically, \cref{tab:if1_iap_results} reports image-level F1-max (I-F1) and average precision (I-AP), while \cref{tab:ppro_pap_results} reports pixel-level PRO (P-PRO) and average precision (P-AP), across six datasets under 1-shot, 2-shot, and 4-shot settings. For consistency with the main paper, all methods are evaluated on the same support sets and the results are averaged over five runs. The best and second-best results are marked in \textbf{bold} and \underline{underlined}, respectively. As shown in \cref{tab:if1_iap_results,tab:ppro_pap_results}, the superiority of Hyper-FSAD extends consistently beyond AUROC to threshold-dependent and region-aware metrics. At the image level, our method achieves the strongest overall performance in both I-F1 and I-AP across different shot settings, indicating that the proposed scoring mechanism not only ranks anomalous samples effectively, but also produces more reliable decision boundaries.

At the pixel level, Hyper-FSAD also delivers highly competitive or best results on P-PRO and P-AP for most datasets and shot settings, demonstrating that Sparse Hyper Matching yields anomaly maps with improved localization quality and region consistency. These observations further support our claim that the combination of sparse reconstruction and dual-branch scoring provides robust benefits for both classification and segmentation in a training-free and language-free setting.

\begin{table*}[p]
\centering
\captionsetup{skip=3pt}
\caption{Image-level F1-max (\textbf{I-F1}) and average precision (\textbf{I-AP}) across six datasets and 1/2/4-shot settings. Symbols, highlighting, backbone variants, and reporting protocol follow \cref{tab:auroc_results}.}
\renewcommand{\arraystretch}{0.77}%
    \resizebox{0.98\textwidth}{!}{%
\begin{tabular}{ll|l|cc|cc|cc|cc|cc|cc}
\toprule
& \multirow{2}{*}{Methods} & \multirow{2}{*}{Venue/Year} &
\multicolumn{2}{c|}{MVTecAD} & \multicolumn{2}{c|}{VisA} & \multicolumn{2}{c|}{MPDD} &
\multicolumn{2}{c|}{BTAD} & \multicolumn{2}{c|}{RESC} & \multicolumn{2}{c}{BraTS} \\
\cmidrule(lr){4-5}\cmidrule(lr){6-7}\cmidrule(lr){8-9}\cmidrule(lr){10-11}\cmidrule(lr){12-13}\cmidrule(lr){14-15}
& & & I-F1 & I-AP & I-F1 & I-AP & I-F1 & I-AP & I-F1 & I-AP & I-F1 & I-AP & I-F1 & I-AP \\
\midrule
\multirow{12}{*}{\rotatebox[origin=c]{90}{1-shot}}
& RegAD & ECCV'22 & 87.1 & 87.2 & 76.2 & 72.2 & 72.9 & 61.5 & 78.2 & 80.5 & 60.4 & 46.6 & 83.0 & 73.2 \\
& FastRecon & ICCV'23 & 93.4 & 95.6 & 81.4 & 82.3 & 80.3 & 76.1 & 92.0 & 95.0 & 71.3 & 80.4 & 86.5 & 85.1 \\
& WinCLIP & CVPR'23 & 92.0 & 96.1 & 82.8 & 87.0 & 80.9 & 82.5 & 81.8 & 86.3 & 60.7 & 48.1 & 87.4 & 92.5 \\
& APRIL-GAN & CVPR'23 & 90.9 & 95.6 & 83.1 & 90.5 & 80.1 & 80.8 & 84.0 & 88.5 & 69.5 & 69.7 & 88.9 & 92.5 \\
& PromptAD & CVPR'24 & 93.7 & 96.6 & 83.3 & 86.8 & 81.6 & 83.5 & 90.4 & 94.4 & 78.2 & 84.3 & 87.5 & 88.5 \\
& AnomalyGPT & AAAI'24 & 94.3 & 96.1 & 84.4 & 87.4 & 79.3 & 75.9 & 89.7 & 94.6 & 76.2 & 83.4 & 85.8 & 82.0 \\
& ReMP-AD & ICCV'25 & 94.3 & \underline{97.3} & 86.9 & 93.1 & 82.9 & \underline{84.0} & 91.5 & 95.4 & 72.6 & 78.2 & 87.5 & 93.8 \\
& KAG-prompt & AAAI'25 & 94.9 & \underline{97.3} & 86.9 & 92.1 & 79.4 & 70.8 & 90.0 & 95.5 & 64.4 & 62.6 & 86.5 & 89.9 \\
& SubspaceAD$^\ddag$ & CVPR'26 & 95.4 & 96.6 & 89.6 & 93.5 & 82.6 & 81.4 & \underline{94.4} & 97.7 & 71.2 & 61.4 & \textbf{89.5} & \textbf{95.5} \\
& Hyper-FSAD$^\dag$ & \multicolumn{1}{c|}{-} & 93.2\textsubscript{$\pm$0.2} & 96.7\textsubscript{$\pm$0.1} & 89.1\textsubscript{$\pm$0.5} & 93.2\textsubscript{$\pm$0.6} & \underline{85.2\textsubscript{$\pm$0.4}} & \underline{84.0\textsubscript{$\pm$2.0}} & 88.4\textsubscript{$\pm$1.4} & 94.0\textsubscript{$\pm$1.2} & \underline{80.7\textsubscript{$\pm$1.9}} & \underline{85.8\textsubscript{$\pm$1.6}} & \underline{89.4\textsubscript{$\pm$0.8}} & \underline{94.9\textsubscript{$\pm$0.5}} \\
& Hyper-FSAD$^\ddag$ & \multicolumn{1}{c|}{-} & \textbf{96.8\textsubscript{$\pm$0.1}} & \textbf{98.1\textsubscript{$\pm$0.0}} & \underline{89.9\textsubscript{$\pm$0.1}} & \underline{93.8\textsubscript{$\pm$0.2}} & \textbf{85.9\textsubscript{$\pm$0.6}} & \textbf{85.8\textsubscript{$\pm$1.0}} & 93.6\textsubscript{$\pm$1.9} & \underline{98.1\textsubscript{$\pm$1.2}} & 77.8\textsubscript{$\pm$0.2} & 71.2\textsubscript{$\pm$0.3} & 86.9\textsubscript{$\pm$1.6} & 87.8\textsubscript{$\pm$1.8} \\
& Hyper-FSAD$^\ast$ & \multicolumn{1}{c|}{-} & \underline{96.6\textsubscript{$\pm$0.3}} & \textbf{98.1\textsubscript{$\pm$0.1}} & \textbf{90.6\textsubscript{$\pm$0.1}} & \textbf{94.0\textsubscript{$\pm$0.1}} & 82.5\textsubscript{$\pm$0.4} & 81.6\textsubscript{$\pm$2.1} & \textbf{95.1\textsubscript{$\pm$0.2}} & \textbf{98.7\textsubscript{$\pm$0.4}} & \textbf{83.1\textsubscript{$\pm$1.4}} & \textbf{90.4\textsubscript{$\pm$2.3}} & 87.7\textsubscript{$\pm$1.0} & 89.6\textsubscript{$\pm$1.1} \\
\midrule
\multirow{12}{*}{\rotatebox[origin=c]{90}{2-shot}}
& RegAD & ECCV'22 & 88.8 & 88.9 & 75.8 & 73.6 & 73.6 & 62.2 & 89.2 & 92.1 & 62.0 & 48.0 & 83.5 & 72.0 \\
& FastRecon & ICCV'23 & 94.5 & 96.5 & 81.8 & 81.3 & 83.0 & 81.4 & 90.3 & 94.9 & 75.7 & 84.9 & 87.2 & 83.6 \\
& WinCLIP & CVPR'23 & 93.0 & 96.6 & 81.3 & 85.9 & 81.0 & 83.3 & 84.2 & 87.6 & 61.0 & 50.6 & 88.0 & 93.4 \\
& APRIL-GAN & CVPR'23 & 91.0 & 95.5 & 82.6 & 90.4 & 79.4 & 80.2 & 84.2 & 88.5 & 70.8 & 71.0 & \underline{89.2} & 93.0 \\
& PromptAD & CVPR'24 & 95.1 & 97.7 & 83.0 & 87.0 & 83.6 & \textbf{88.2} & 89.0 & 94.4 & \underline{79.6} & \underline{85.9} & 88.1 & 89.3 \\
& AnomalyGPT & AAAI'24 & 95.0 & 97.0 & 84.1 & 88.8 & 82.1 & 81.1 & 89.9 & 95.0 & 78.2 & 83.5 & 86.6 & 83.3 \\
& ReMP-AD & ICCV'25 & 94.9 & 97.8 & 89.0 & \underline{94.7} & \underline{85.5} & \underline{86.5} & 92.1 & 96.6 & 76.0 & 81.7 & 88.6 & \underline{94.3} \\
& KAG-prompt & AAAI'25 & 96.2 & \textbf{98.4} & 88.6 & 93.8 & 79.9 & 71.1 & 90.1 & 95.1 & 67.3 & 75.3 & 86.5 & 92.3 \\
& SubspaceAD$^\ddag$ & CVPR'26 & 96.3 & 97.3 & 90.2 & 93.4 & 85.0 & 84.2 & \textbf{95.5} & \underline{98.5} & 71.8 & 58.3 & 89.0 & 93.6 \\
& Hyper-FSAD$^\dag$ & \multicolumn{1}{c|}{-} & 93.7\textsubscript{$\pm$0.3} & 96.8\textsubscript{$\pm$0.4} & 90.4\textsubscript{$\pm$0.4} & \textbf{95.1\textsubscript{$\pm$0.4}} & \textbf{85.9\textsubscript{$\pm$0.4}} & 85.9\textsubscript{$\pm$0.9} & 89.8\textsubscript{$\pm$1.1} & 95.1\textsubscript{$\pm$0.6} & 79.3\textsubscript{$\pm$1.1} & 84.3\textsubscript{$\pm$0.9} & \textbf{89.9\textsubscript{$\pm$1.6}} & \textbf{95.2\textsubscript{$\pm$1.2}} \\
& Hyper-FSAD$^\ddag$ & \multicolumn{1}{c|}{-} & \textbf{97.1\textsubscript{$\pm$0.2}} & 98.0\textsubscript{$\pm$0.1} & \underline{91.1\textsubscript{$\pm$0.2}} & \textbf{95.1\textsubscript{$\pm$0.2}} & 85.1\textsubscript{$\pm$0.7} & 84.6\textsubscript{$\pm$0.7} & 93.9\textsubscript{$\pm$2.0} & 98.0\textsubscript{$\pm$0.9} & 78.9\textsubscript{$\pm$0.2} & 70.0\textsubscript{$\pm$0.3} & 86.8\textsubscript{$\pm$1.9} & 86.2\textsubscript{$\pm$1.0} \\
& Hyper-FSAD$^\ast$ & \multicolumn{1}{c|}{-} & \underline{96.9\textsubscript{$\pm$0.2}} & \underline{98.2\textsubscript{$\pm$0.0}} & \textbf{91.7\textsubscript{$\pm$0.2}} & 94.5\textsubscript{$\pm$0.3} & 82.5\textsubscript{$\pm$0.1} & 83.6\textsubscript{$\pm$0.4} & \underline{95.3\textsubscript{$\pm$0.7}} & \textbf{98.6\textsubscript{$\pm$0.3}} & \textbf{81.9\textsubscript{$\pm$2.2}} & \textbf{87.3\textsubscript{$\pm$1.2}} & 89.0\textsubscript{$\pm$2.1} & 89.2\textsubscript{$\pm$1.7} \\
\midrule
\multirow{12}{*}{\rotatebox[origin=c]{90}{4-shot}}
& RegAD & ECCV'22 & 89.8 & 91.7 & 77.1 & 73.9 & 75.9 & 66.9 & 91.2 & 94.5 & 63.7 & 51.0 & 83.9 & 75.5 \\
& FastRecon & ICCV'23 & 95.3 & 97.2 & 82.6 & 85.1 & 82.8 & 81.0 & 91.8 & 96.2 & 76.4 & 84.6 & 87.6 & 86.4 \\
& WinCLIP & CVPR'23 & 94.0 & 97.3 & 82.8 & 87.8 & 83.1 & 86.1 & 84.3 & 88.0 & 62.6 & 54.0 & 88.0 & 93.4 \\
& APRIL-GAN & CVPR'23 & 91.6 & 95.9 & 83.3 & 91.1 & 80.7 & 81.6 & 83.9 & 88.4 & 70.7 & 71.2 & 89.1 & 93.5 \\
& PromptAD & CVPR'24 & 95.2 & 97.5 & 83.9 & 89.2 & \underline{87.6} & \textbf{92.6} & 91.0 & 94.4 & \textbf{81.0} & \textbf{87.3} & 88.2 & 92.4 \\
& AnomalyGPT & AAAI'24 & 95.9 & 98.0 & 87.2 & 92.6 & \textbf{88.5} & \underline{89.0} & 91.0 & 95.9 & 78.8 & \underline{85.4} & 86.2 & 87.8 \\
& ReMP-AD & ICCV'25 & 95.4 & \underline{98.2} & 89.5 & \underline{95.3} & 83.7 & 84.7 & 94.3 & 98.0 & 77.4 & 83.0 & 88.6 & \underline{94.3} \\
& KAG-prompt & AAAI'25 & 96.6 & \textbf{98.3} & 88.9 & 93.9 & 81.7 & 75.3 & 91.0 & 95.2 & 72.4 & 79.1 & 87.3 & 94.1 \\
& SubspaceAD$^\ddag$ & CVPR'26 & 96.9 & 97.6 & \underline{91.2} & 94.9 & 84.7 & 84.3 & 95.4 & \underline{98.7} & 74.1 & 72.3 & \underline{89.3} & 91.4 \\
& Hyper-FSAD$^\dag$ & \multicolumn{1}{c|}{-} & 94.7\textsubscript{$\pm$0.2} & 97.6\textsubscript{$\pm$0.2} & 90.8\textsubscript{$\pm$0.1} & \underline{95.3\textsubscript{$\pm$0.4}} & 86.4\textsubscript{$\pm$0.9} & 86.7\textsubscript{$\pm$1.7} & 91.3\textsubscript{$\pm$1.7} & 96.4\textsubscript{$\pm$0.7} & \underline{80.7\textsubscript{$\pm$0.5}} & 84.3\textsubscript{$\pm$1.4} & \textbf{89.9\textsubscript{$\pm$0.4}} & \textbf{94.9\textsubscript{$\pm$0.4}} \\
& Hyper-FSAD$^\ddag$ & \multicolumn{1}{c|}{-} & \textbf{97.4\textsubscript{$\pm$0.1}} & 98.1\textsubscript{$\pm$0.1} & \textbf{91.7\textsubscript{$\pm$0.2}} & \textbf{95.6\textsubscript{$\pm$0.2}} & 85.4\textsubscript{$\pm$0.8} & 84.6\textsubscript{$\pm$0.8} & \underline{95.7\textsubscript{$\pm$0.5}} & 98.5\textsubscript{$\pm$0.1} & 78.9\textsubscript{$\pm$0.4} & 70.2\textsubscript{$\pm$0.5} & 86.9\textsubscript{$\pm$0.3} & 84.7\textsubscript{$\pm$0.8} \\
& Hyper-FSAD$^\ast$ & \multicolumn{1}{c|}{-} & \underline{97.1\textsubscript{$\pm$0.0}} & \textbf{98.3\textsubscript{$\pm$0.0}} & \textbf{91.7\textsubscript{$\pm$0.1}} & 94.7\textsubscript{$\pm$0.3} & 82.6\textsubscript{$\pm$1.0} & 83.2\textsubscript{$\pm$0.8} & \textbf{96.2\textsubscript{$\pm$0.1}} & \textbf{98.9\textsubscript{$\pm$0.1}} & 79.6\textsubscript{$\pm$1.6} & 84.9\textsubscript{$\pm$0.9} & 88.6\textsubscript{$\pm$0.4} & 86.8\textsubscript{$\pm$0.7} \\
\bottomrule
\end{tabular}}
\label{tab:if1_iap_results}

\vspace{0pt}

\centering
\caption{Pixel-level PRO (\textbf{P-PRO}) and average precision (\textbf{P-AP}) across six datasets and 1/2/4-shot settings. Symbols, highlighting, backbone variants, and reporting protocol follow \cref{tab:auroc_results}.}
\renewcommand{\arraystretch}{0.77}%
    \resizebox{0.98\textwidth}{!}{%
\begin{tabular}{ll|l|cc|cc|cc|cc|cc|cc}
\toprule
& \multirow{2}{*}{Methods} & \multirow{2}{*}{Venue/Year} &
\multicolumn{2}{c|}{MVTecAD} & \multicolumn{2}{c|}{VisA} & \multicolumn{2}{c|}{MPDD} &
\multicolumn{2}{c|}{BTAD} & \multicolumn{2}{c|}{RESC} & \multicolumn{2}{c}{BraTS} \\
\cmidrule(lr){4-5}\cmidrule(lr){6-7}\cmidrule(lr){8-9}\cmidrule(lr){10-11}\cmidrule(lr){12-13}\cmidrule(lr){14-15}
& & & P-PRO & P-AP & P-PRO & P-AP & P-PRO & P-AP & P-PRO & P-AP & P-PRO & P-AP & P-PRO & P-AP \\
\midrule
\multirow{12}{*}{\rotatebox[origin=c]{90}{1-shot}}
& RegAD & ECCV'22 & 76.8 & 36.1 & 68.7 & 17.9 & 74.6 & 8.4 & 68.9 & 33.1 & 53.2 & 14.6 & 62.6 & 17.5 \\
& FastRecon & ICCV'23 & 90.8 & 50.5 & 85.2 & 26.0 & 89.3 & 26.0 & 80.6 & 60.0 & 83.6 & \underline{66.5} & 71.4 & 39.0 \\
& WinCLIP & CVPR'23 & 82.0 & 35.5 & 85.2 & 19.2 & 86.7 & 23.6 & 61.7 & 26.0 & 73.3 & 33.4 & 64.2 & 33.2 \\
& APRIL-GAN & CVPR'23 & 84.5 & 43.8 & 87.0 & 29.3 & 85.1 & 28.3 & 73.4 & 50.4 & 74.9 & 54.1 & 62.7 & 38.7 \\
& PromptAD & CVPR'24 & 90.9 & 53.3 & 88.4 & 29.1 & 90.4 & 30.1 & 79.4 & 61.3 & \underline{85.8} & \textbf{68.2} & 74.8 & 46.0 \\
& AnomalyGPT & AAAI'24 & 89.0 & 48.8 & 65.3 & 16.8 & 89.9 & 31.3 & 71.7 & 49.9 & 58.5 & 27.4 & 69.9 & 30.1 \\
& ReMP-AD & ICCV'25 & 90.7 & 56.1 & \underline{91.5} & 37.2 & \underline{92.2} & 34.5 & \textbf{82.8} & 58.1 & 71.2 & 43.9 & 64.3 & 41.3 \\
& KAG-prompt & AAAI'25 & 90.5 & 55.0 & 83.5 & 37.0 & 88.0 & 33.4 & 72.1 & 52.0 & 51.2 & 20.3 & \underline{77.8} & 55.3 \\
& SubspaceAD$^\ddag$ & CVPR'26 & \textbf{92.9} & 47.7 & \textbf{93.6} & 24.9 & \textbf{92.7} & 26.4 & 81.4 & 35.9 & 77.7 & 32.7 & \textbf{79.7} & 39.0 \\
& Hyper-FSAD$^\dag$ & \multicolumn{1}{c|}{-} & 87.0\textsubscript{$\pm$0.2} & 62.2\textsubscript{$\pm$0.1} & 87.0\textsubscript{$\pm$0.1} & \underline{38.0\textsubscript{$\pm$0.6}} & 89.9\textsubscript{$\pm$0.7} & \textbf{43.4\textsubscript{$\pm$1.1}} & 79.4\textsubscript{$\pm$0.4} & 63.0\textsubscript{$\pm$0.4} & 71.7\textsubscript{$\pm$1.4} & 59.9\textsubscript{$\pm$0.7} & 59.5\textsubscript{$\pm$0.1} & 54.9\textsubscript{$\pm$1.8} \\
& Hyper-FSAD$^\ddag$ & \multicolumn{1}{c|}{-} & \textbf{92.9\textsubscript{$\pm$0.1}} & \underline{65.5\textsubscript{$\pm$0.2}} & 91.0\textsubscript{$\pm$0.1} & 37.2\textsubscript{$\pm$0.3} & 91.7\textsubscript{$\pm$0.9} & \underline{39.8\textsubscript{$\pm$0.7}} & \underline{82.2\textsubscript{$\pm$0.2}} & \textbf{72.3\textsubscript{$\pm$0.1}} & 79.0\textsubscript{$\pm$0.7} & 45.5\textsubscript{$\pm$1.1} & 74.6\textsubscript{$\pm$0.6} & \underline{57.0\textsubscript{$\pm$0.9}} \\
& Hyper-FSAD$^\ast$ & \multicolumn{1}{c|}{-} & \underline{92.5\textsubscript{$\pm$0.1}} & \textbf{70.0\textsubscript{$\pm$0.1}} & 90.4\textsubscript{$\pm$0.1} & \textbf{40.7\textsubscript{$\pm$0.5}} & 89.2\textsubscript{$\pm$0.8} & 35.2\textsubscript{$\pm$0.4} & 80.9\textsubscript{$\pm$0.4} & \underline{70.6\textsubscript{$\pm$0.8}} & \textbf{86.1\textsubscript{$\pm$0.5}} & 62.7\textsubscript{$\pm$0.3} & 76.3\textsubscript{$\pm$0.6} & \textbf{62.7\textsubscript{$\pm$1.3}} \\
\midrule
\multirow{12}{*}{\rotatebox[origin=c]{90}{2-shot}}
& RegAD & ECCV'22 & 82.7 & 42.1 & 70.2 & 21.6 & 79.3 & 13.1 & 74.1 & 42.3 & 54.5 & 15.1 & 66.0 & 20.6 \\
& FastRecon & ICCV'23 & 91.5 & 51.9 & 85.2 & 30.6 & \underline{92.7} & 35.7 & 80.5 & 61.6 & 84.7 & \underline{68.4} & 71.7 & 35.3 \\
& WinCLIP & CVPR'23 & 82.7 & 37.4 & 85.9 & 23.6 & 89.4 & 26.8 & 63.4 & 27.5 & 74.6 & 35.7 & 63.6 & 32.9 \\
& APRIL-GAN & CVPR'23 & 85.5 & 45.1 & 86.8 & 30.1 & 86.6 & 30.2 & 73.2 & 50.8 & 76.5 & 56.0 & 63.1 & 38.8 \\
& PromptAD & CVPR'24 & 91.5 & 54.8 & 89.4 & 34.4 & 92.6 & 34.5 & 79.6 & 62.3 & \textbf{86.6} & \textbf{69.9} & 75.2 & 45.7 \\
& AnomalyGPT & AAAI'24 & 90.2 & 50.7 & 65.0 & 19.7 & 91.8 & 34.5 & 72.4 & 50.6 & 59.0 & 27.9 & 70.2 & 29.7 \\
& ReMP-AD & ICCV'25 & 91.7 & 58.6 & 92.0 & 38.8 & \textbf{93.2} & 37.4 & \underline{83.2} & 61.0 & 75.7 & 50.0 & 66.9 & 45.1 \\
& KAG-prompt & AAAI'25 & 91.1 & 56.7 & 85.2 & 37.5 & 88.1 & 34.2 & 71.8 & 52.5 & 50.4 & 21.8 & 78.1 & 58.0 \\
& SubspaceAD$^\ddag$ & CVPR'26 & \textbf{93.6} & 48.6 & \textbf{94.4} & 26.4 & 91.0 & 28.4 & 82.6 & 37.1 & 79.9 & 33.9 & \textbf{82.3} & 42.1 \\
& Hyper-FSAD$^\dag$ & \multicolumn{1}{c|}{-} & 87.7\textsubscript{$\pm$0.3} & 63.6\textsubscript{$\pm$0.4} & 88.3\textsubscript{$\pm$0.2} & \underline{40.8\textsubscript{$\pm$0.3}} & 89.3\textsubscript{$\pm$0.4} & \textbf{43.7\textsubscript{$\pm$0.8}} & 79.7\textsubscript{$\pm$0.3} & 64.9\textsubscript{$\pm$0.8} & 71.3\textsubscript{$\pm$0.5} & 61.3\textsubscript{$\pm$0.9} & 63.4\textsubscript{$\pm$1.9} & \underline{61.8\textsubscript{$\pm$2.0}} \\
& Hyper-FSAD$^\ddag$ & \multicolumn{1}{c|}{-} & \underline{93.1\textsubscript{$\pm$0.2}} & \underline{66.0\textsubscript{$\pm$0.2}} & \underline{92.2\textsubscript{$\pm$0.1}} & 39.8\textsubscript{$\pm$0.6} & 90.5\textsubscript{$\pm$0.7} & \underline{40.1\textsubscript{$\pm$0.9}} & \textbf{83.3\textsubscript{$\pm$0.6}} & \textbf{73.8\textsubscript{$\pm$0.6}} & 81.3\textsubscript{$\pm$0.3} & 48.2\textsubscript{$\pm$1.2} & 77.3\textsubscript{$\pm$1.4} & 59.6\textsubscript{$\pm$1.8} \\
& Hyper-FSAD$^\ast$ & \multicolumn{1}{c|}{-} & 92.9\textsubscript{$\pm$0.2} & \textbf{70.8\textsubscript{$\pm$0.1}} & 91.5\textsubscript{$\pm$0.1} & \textbf{42.7\textsubscript{$\pm$0.2}} & 88.6\textsubscript{$\pm$0.3} & 37.9\textsubscript{$\pm$0.9} & 82.1\textsubscript{$\pm$0.2} & \underline{72.1\textsubscript{$\pm$0.7}} & \underline{86.3\textsubscript{$\pm$1.4}} & 64.8\textsubscript{$\pm$1.1} & \underline{79.2\textsubscript{$\pm$1.5}} & \textbf{67.9\textsubscript{$\pm$1.0}} \\
\midrule
\multirow{12}{*}{\rotatebox[origin=c]{90}{4-shot}}
& RegAD & ECCV'22 & 86.0 & 46.5 & 72.8 & 21.4 & 83.3 & 16.4 & 75.5 & 44.1 & 60.0 & 18.1 & 70.2 & 24.8 \\
& FastRecon & ICCV'23 & 92.2 & 53.9 & 86.2 & 32.5 & 93.1 & 37.8 & 80.8 & 62.2 & 82.8 & \underline{68.5} & 73.8 & 43.9 \\
& WinCLIP & CVPR'23 & 83.8 & 39.2 & 86.5 & 25.7 & 90.7 & 29.3 & 64.7 & 28.5 & 75.7 & 38.4 & 63.0 & 40.0 \\
& APRIL-GAN & CVPR'23 & 86.6 & 46.6 & 86.6 & 30.6 & 86.9 & 31.4 & 74.6 & 50.9 & 77.6 & 57.3 & 63.0 & 40.0 \\
& PromptAD & CVPR'24 & 92.4 & 57.5 & 89.5 & 37.5 & \underline{94.0} & 40.5 & 80.1 & 62.5 & \textbf{86.8} & \textbf{71.3} & 77.0 & 54.4 \\
& AnomalyGPT & AAAI'24 & 91.2 & 52.9 & 65.4 & 20.8 & 93.2 & 40.8 & 73.5 & 50.6 & 60.0 & 28.5 & 73.6 & 41.8 \\
& ReMP-AD & ICCV'25 & 92.6 & 61.4 & 92.3 & 41.6 & \textbf{94.2} & 39.7 & \underline{83.2} & 61.6 & 77.5 & 53.7 & 69.5 & 48.2 \\
& KAG-prompt & AAAI'25 & 91.4 & 57.4 & 85.5 & 38.8 & 90.2 & 35.4 & 70.9 & 52.5 & 53.9 & 24.1 & \underline{78.6} & 59.5 \\
& SubspaceAD$^\ddag$ & CVPR'26 & \textbf{93.9} & 50.4 & \textbf{94.9} & 27.0 & 91.7 & 28.6 & \textbf{83.4} & 36.6 & 81.8 & 36.4 & \textbf{79.9} & 45.4 \\
& Hyper-FSAD$^\dag$ & \multicolumn{1}{c|}{-} & 88.4\textsubscript{$\pm$0.2} & 64.9\textsubscript{$\pm$0.3} & 89.3\textsubscript{$\pm$0.2} & \underline{41.8\textsubscript{$\pm$0.5}} & 90.3\textsubscript{$\pm$0.4} & \textbf{44.8\textsubscript{$\pm$1.0}} & 80.8\textsubscript{$\pm$0.3} & 66.5\textsubscript{$\pm$0.6} & 72.1\textsubscript{$\pm$0.6} & 61.7\textsubscript{$\pm$0.8} & 64.0\textsubscript{$\pm$1.1} & \textbf{62.0\textsubscript{$\pm$2.3}} \\
& Hyper-FSAD$^\ddag$ & \multicolumn{1}{c|}{-} & \underline{93.5\textsubscript{$\pm$0.1}} & \underline{66.2\textsubscript{$\pm$0.1}} & \underline{93.2\textsubscript{$\pm$0.1}} & 41.2\textsubscript{$\pm$0.2} & 91.4\textsubscript{$\pm$0.6} & \underline{41.4\textsubscript{$\pm$0.9}} & \textbf{83.4\textsubscript{$\pm$0.6}} & \textbf{74.2\textsubscript{$\pm$0.4}} & 80.9\textsubscript{$\pm$1.1} & 48.3\textsubscript{$\pm$1.1} & 77.4\textsubscript{$\pm$0.8} & 52.3\textsubscript{$\pm$1.6} \\
& Hyper-FSAD$^\ast$ & \multicolumn{1}{c|}{-} & \underline{93.5\textsubscript{$\pm$0.1}} & \textbf{71.5\textsubscript{$\pm$0.1}} & 92.5\textsubscript{$\pm$0.1} & \textbf{43.2\textsubscript{$\pm$0.2}} & 89.3\textsubscript{$\pm$0.3} & 40.3\textsubscript{$\pm$1.6} & 82.2\textsubscript{$\pm$0.4} & \underline{73.0\textsubscript{$\pm$0.4}} & \underline{83.2\textsubscript{$\pm$1.0}} & 64.1\textsubscript{$\pm$2.1} & 77.4\textsubscript{$\pm$1.4} & \underline{61.5\textsubscript{$\pm$1.7}} \\
\bottomrule
\end{tabular}}
\label{tab:ppro_pap_results}
\end{table*}

\subsection{Cross-Dataset Patch-Weighting Ablation}
\label{sec:cross_dataset_lookup_ablation}

We further report the patch-weighting comparison on MPDD, BTAD, RESC, and BraTS, complementing the MVTecAD and VisA results in \cref{tab:lookup_ablation}. All configurations use the same default DINOv3 ViT-B/16 backbone and $448\times448$ input resolution. For consistency, every sparsemax entry is identical to the corresponding DINOv3 result in \cref{tab:auroc_results}, and standard deviations are omitted.

\subsection{Cross-Dataset DINOv3 Layer Selection}
\label{sec:cross_dataset_layer_selection}

We further report the layer-selection comparison on MPDD, BTAD, RESC, and BraTS, complementing the MVTecAD and VisA results in \cref{tab:layer_selection_main}. All configurations use DINOv3 ViT-B/16 with $448\times448$ inputs. The $\{3,6,9,12\}$ entries are aligned with the corresponding DINOv3 results in \cref{tab:auroc_results}, and standard deviations are omitted.

\begin{table*}[!t]
\centering
\caption{Cross-dataset patch-weighting ablation across 1/2/4-shot settings. I/P denote I-AUROC/P-AUROC (\%); sparsemax values match \cref{tab:auroc_results}, and standard deviations are omitted.}
\label{tab:cross_dataset_lookup_1shot}
\label{tab:cross_dataset_lookup_2shot}
\label{tab:cross_dataset_lookup_4shot}
\setlength{\tabcolsep}{3.2pt}
\renewcommand{\arraystretch}{1.02}
\small
\resizebox{\textwidth}{!}{%
\begin{tabular}{ll|cccc|cccc|cccc}
\toprule
\multirow{2}{*}{Dataset} & \multirow{2}{*}{Metric} &
\multicolumn{4}{c|}{1-shot} & \multicolumn{4}{c|}{2-shot} & \multicolumn{4}{c}{4-shot} \\
\cmidrule(lr){3-6}\cmidrule(lr){7-10}\cmidrule(lr){11-14}
& & Softmax & Entmax & Top-$p$ & Sparsemax & Softmax & Entmax & Top-$p$ & Sparsemax & Softmax & Entmax & Top-$p$ & Sparsemax \\
\midrule
\multirow{2}{*}{MPDD} & I & 67.1 & 78.6 & 74.6 & \textbf{78.8} & 68.0 & 80.1 & 76.8 & \textbf{80.3} & 67.7 & 80.1 & 77.0 & \textbf{80.3} \\
& P & 89.0 & 95.3 & 94.6 & \textbf{96.4} & 88.6 & 95.7 & 95.0 & \textbf{96.4} & 88.4 & 95.9 & 95.3 & \textbf{96.5} \\
\multirow{2}{*}{BTAD} & I & 83.7 & 95.2 & 91.2 & \textbf{95.4} & 83.5 & 95.6 & 92.3 & \textbf{95.8} & 83.6 & 96.0 & 92.9 & \textbf{96.2} \\
& P & 90.5 & 96.8 & 96.1 & \textbf{97.9} & 90.2 & 97.3 & 96.6 & \textbf{98.0} & 90.1 & 97.6 & 97.0 & \textbf{98.2} \\
\multirow{2}{*}{RESC} & I & 81.3 & 92.8 & 88.8 & \textbf{93.0} & 79.2 & 91.3 & 88.0 & \textbf{91.5} & 76.5 & 88.9 & 85.8 & \textbf{89.1} \\
& P & 88.9 & 95.2 & 94.5 & \textbf{96.3} & 88.7 & 95.8 & 95.1 & \textbf{96.5} & 87.8 & 95.3 & 94.7 & \textbf{95.9} \\
\multirow{2}{*}{BraTS} & I & 70.5 & 82.0 & 78.0 & \textbf{82.2} & 70.9 & 83.0 & 79.7 & \textbf{83.2} & 67.9 & 80.3 & 77.2 & \textbf{80.5} \\
& P & 88.6 & 94.9 & 94.2 & \textbf{96.0} & 89.1 & 96.2 & 95.5 & \textbf{96.9} & 88.4 & 95.9 & 95.3 & \textbf{96.5} \\
\bottomrule
\end{tabular}%
}

\vspace{8pt}

\centering
\caption{Cross-dataset DINOv3 layer-selection analysis across 1/2/4-shot settings. I/P denote I-AUROC/P-AUROC (\%); standard deviations are omitted.}
\label{tab:cross_dataset_layers_1shot}
\label{tab:cross_dataset_layers_2shot}
\label{tab:cross_dataset_layers_4shot}
\setlength{\tabcolsep}{3.2pt}
\renewcommand{\arraystretch}{1.02}
\small
\resizebox{\textwidth}{!}{%
\begin{tabular}{ll|cccc|cccc|cccc}
\toprule
\multirow{2}{*}{Dataset} & \multirow{2}{*}{Metric} &
\multicolumn{4}{c|}{1-shot} & \multicolumn{4}{c|}{2-shot} & \multicolumn{4}{c}{4-shot} \\
\cmidrule(lr){3-6}\cmidrule(lr){7-10}\cmidrule(lr){11-14}
& & $\{6\}$ & $\{12\}$ & $\{6,12\}$ & $\{3,6,9,12\}$ & $\{6\}$ & $\{12\}$ & $\{6,12\}$ & $\{3,6,9,12\}$ & $\{6\}$ & $\{12\}$ & $\{6,12\}$ & $\{3,6,9,12\}$ \\
\midrule
\multirow{2}{*}{MPDD} & I & 75.4 & 72.3 & 76.6 & \textbf{78.8} & 77.3 & 75.1 & 78.8 & \textbf{80.3} & 77.5 & 76.1 & 78.9 & \textbf{80.3} \\
& P & 94.9 & 93.2 & 96.1 & \textbf{96.4} & 95.2 & 94.0 & 96.2 & \textbf{96.4} & 95.4 & 94.1 & 96.3 & \textbf{96.5} \\
\multirow{2}{*}{BTAD} & I & 92.0 & 88.9 & 93.2 & \textbf{95.4} & 92.8 & 90.6 & 94.3 & \textbf{95.8} & 93.4 & 92.0 & 94.8 & \textbf{96.2} \\
& P & 96.4 & 94.7 & 97.6 & \textbf{97.9} & 96.8 & 95.6 & 97.8 & \textbf{98.0} & 97.1 & 95.8 & 98.0 & \textbf{98.2} \\
\multirow{2}{*}{RESC} & I & 89.6 & 86.5 & 90.8 & \textbf{93.0} & 88.5 & 86.3 & 90.0 & \textbf{91.5} & 86.3 & 84.9 & 87.7 & \textbf{89.1} \\
& P & 94.8 & 93.1 & 96.0 & \textbf{96.3} & 95.3 & 94.1 & 96.3 & \textbf{96.5} & 94.8 & 93.5 & 95.7 & \textbf{95.9} \\
\multirow{2}{*}{BraTS} & I & 78.8 & 75.7 & 80.0 & \textbf{82.2} & 80.2 & 78.0 & 81.7 & \textbf{83.2} & 77.7 & 76.3 & 79.1 & \textbf{80.5} \\
& P & 94.5 & 92.8 & 95.7 & \textbf{96.0} & 95.7 & 94.5 & 96.7 & \textbf{96.9} & 95.4 & 94.1 & 96.3 & \textbf{96.5} \\
\bottomrule
\end{tabular}%
}
\end{table*}
\begin{figure*}[!t]
    \centering
    \begin{subfigure}[t]{0.245\textwidth}
        \centering
        \includegraphics[width=\linewidth]{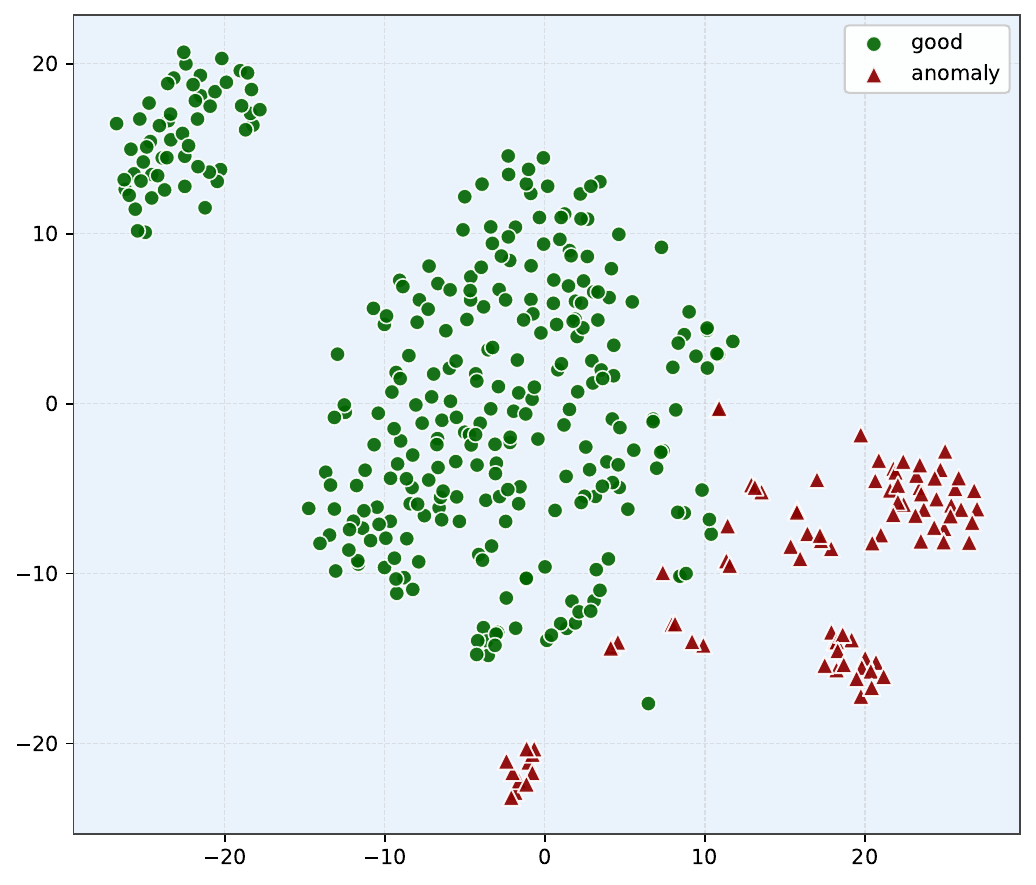}
        \caption{\texttt{carpet}}
    \end{subfigure}\hfill
    \begin{subfigure}[t]{0.245\textwidth}
        \centering
        \includegraphics[width=\linewidth]{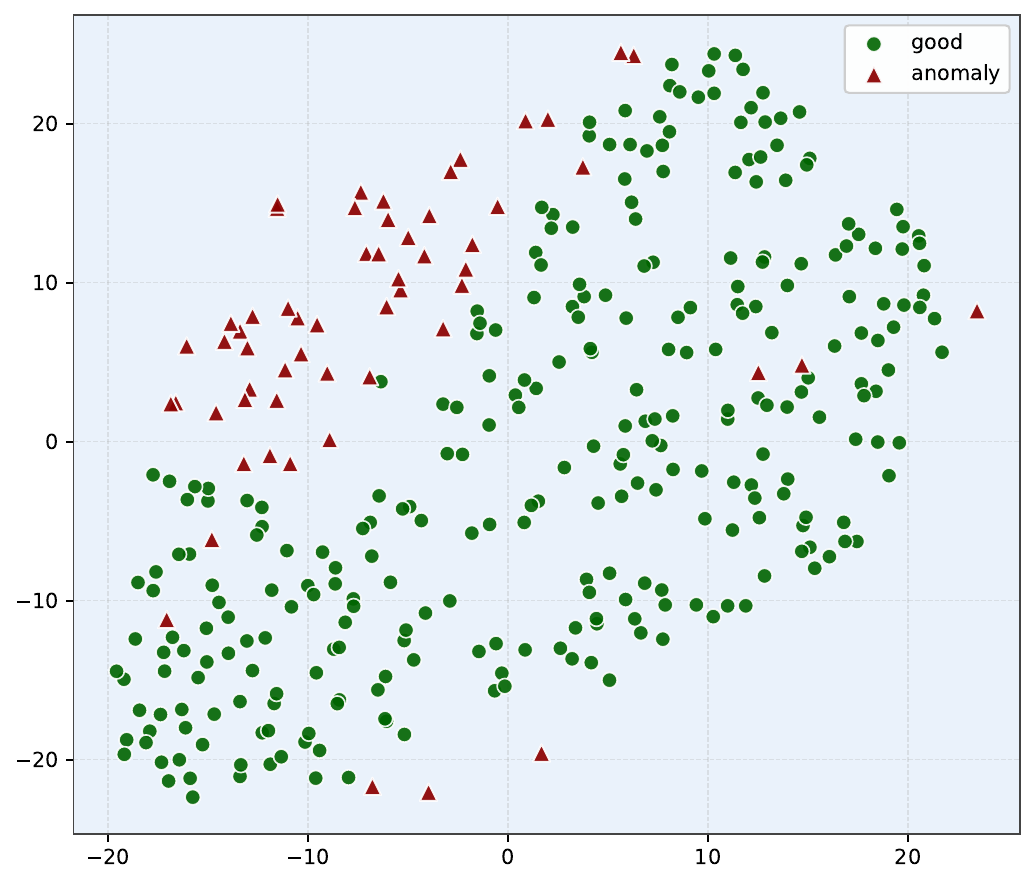}
        \caption{\texttt{grid}}
    \end{subfigure}\hfill
    \begin{subfigure}[t]{0.245\textwidth}
        \centering
        \includegraphics[width=\linewidth]{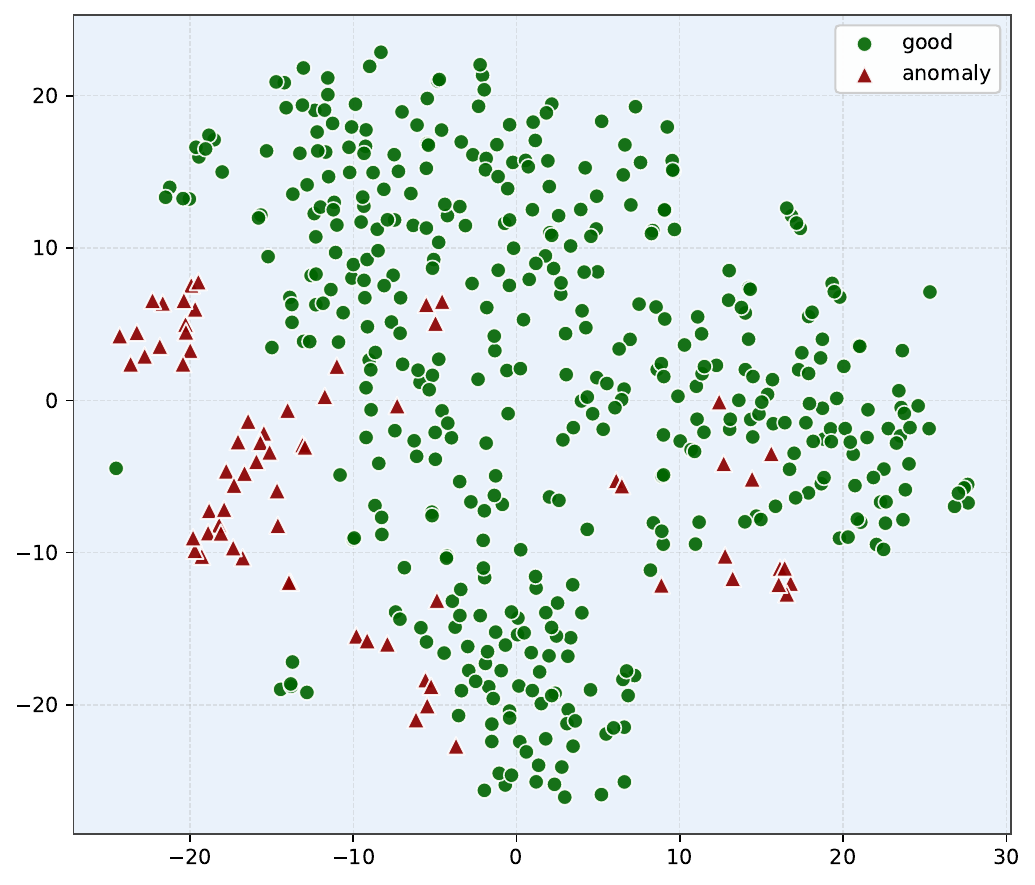}
        \caption{\texttt{hazelnut}}
    \end{subfigure}\hfill
    \begin{subfigure}[t]{0.245\textwidth}
        \centering
        \includegraphics[width=\linewidth]{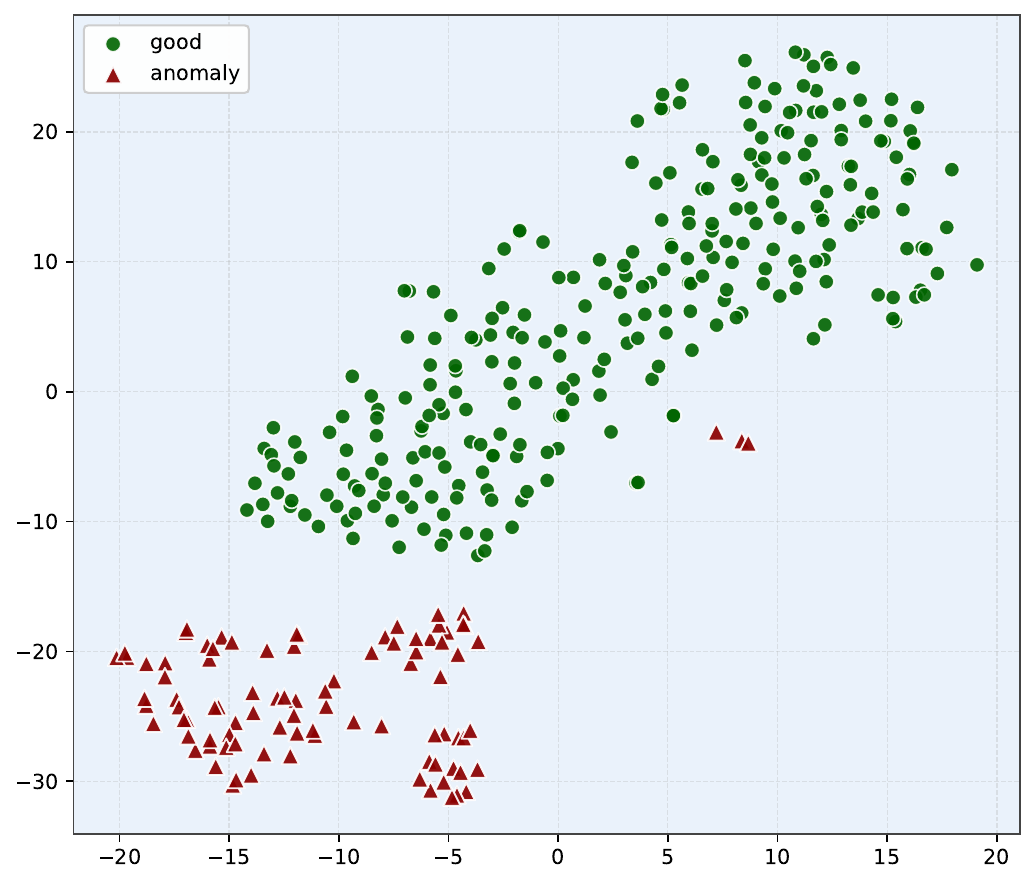}
        \caption{\texttt{leather}}
    \end{subfigure}

    \vspace{2mm}

    \begin{subfigure}[t]{0.245\textwidth}
        \centering
        \includegraphics[width=\linewidth]{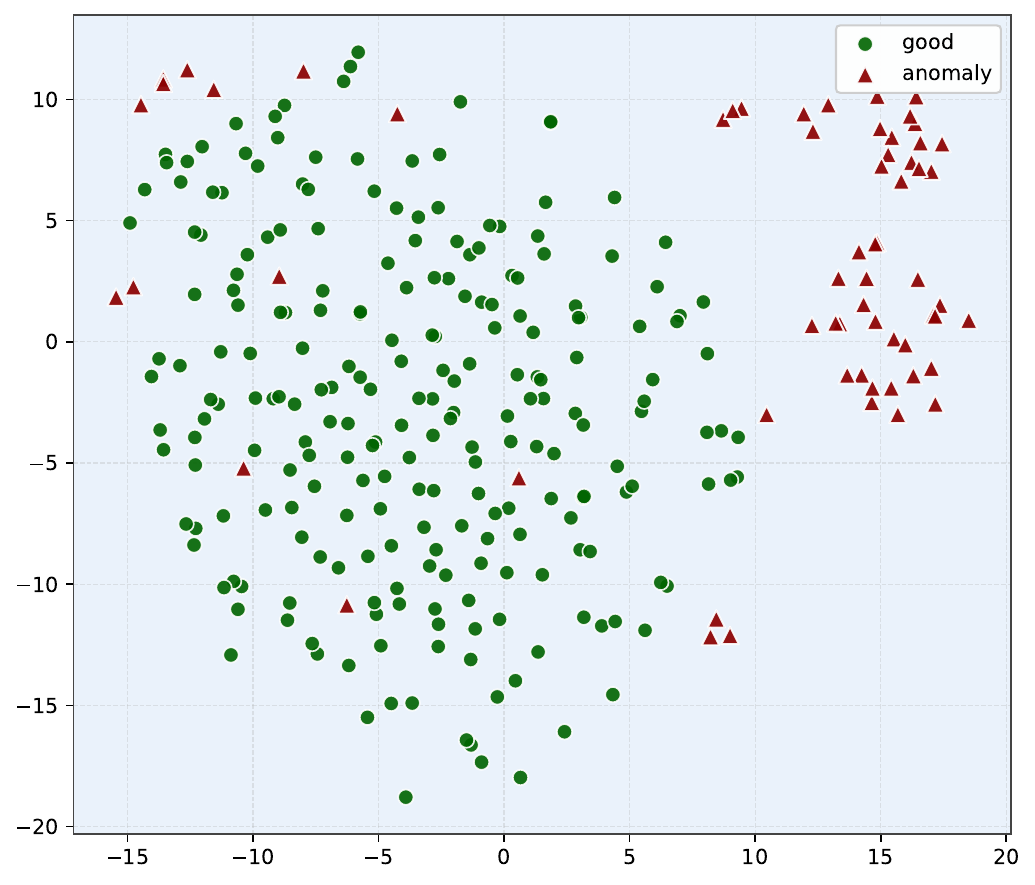}
        \caption{\texttt{bottle}}
    \end{subfigure}\hfill
    \begin{subfigure}[t]{0.245\textwidth}
        \centering
        \includegraphics[width=\linewidth]{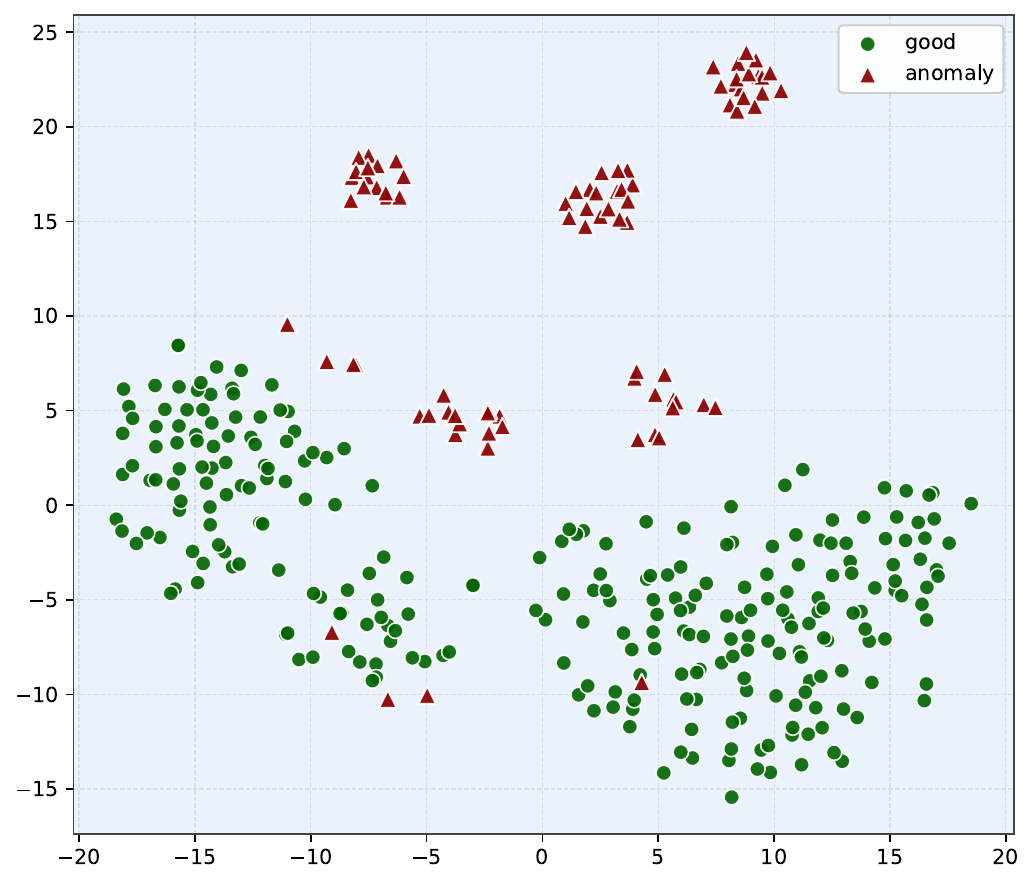}
        \caption{\texttt{tile}}
    \end{subfigure}\hfill
    \begin{subfigure}[t]{0.245\textwidth}
        \centering
        \includegraphics[width=\linewidth]{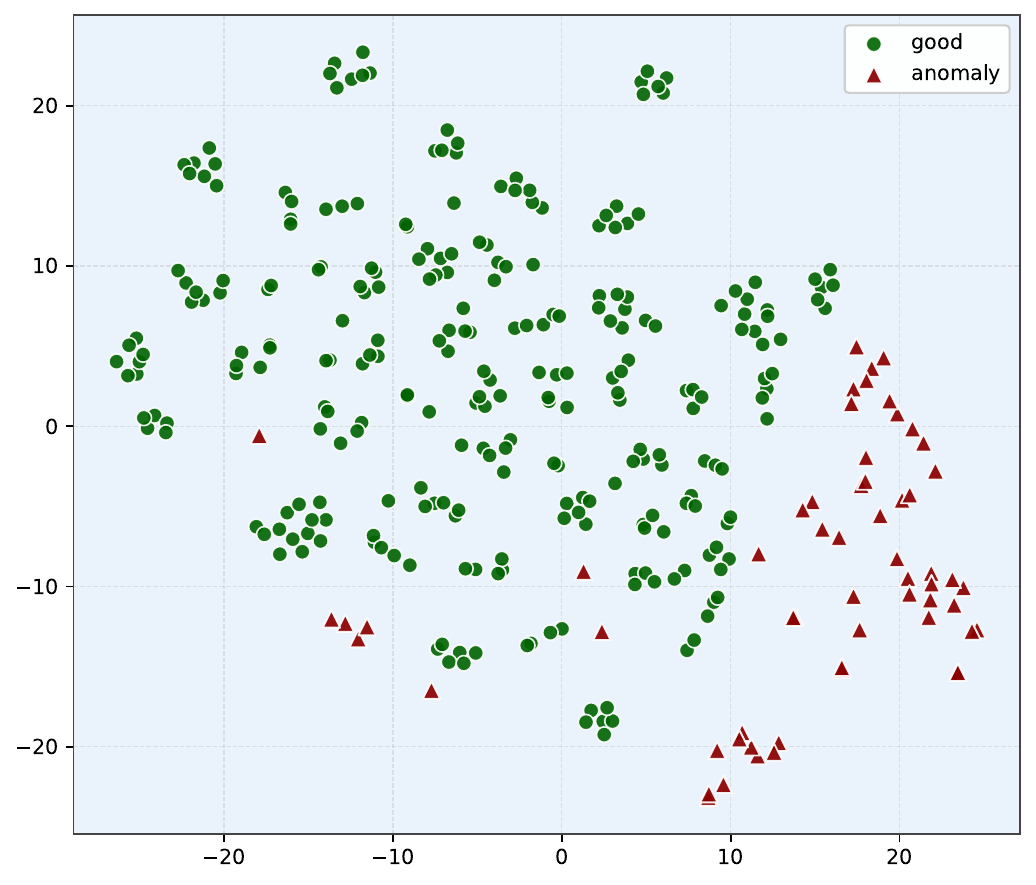}
        \caption{\texttt{wood}}
    \end{subfigure}\hfill
    \begin{subfigure}[t]{0.245\textwidth}
        \centering
        \includegraphics[width=\linewidth]{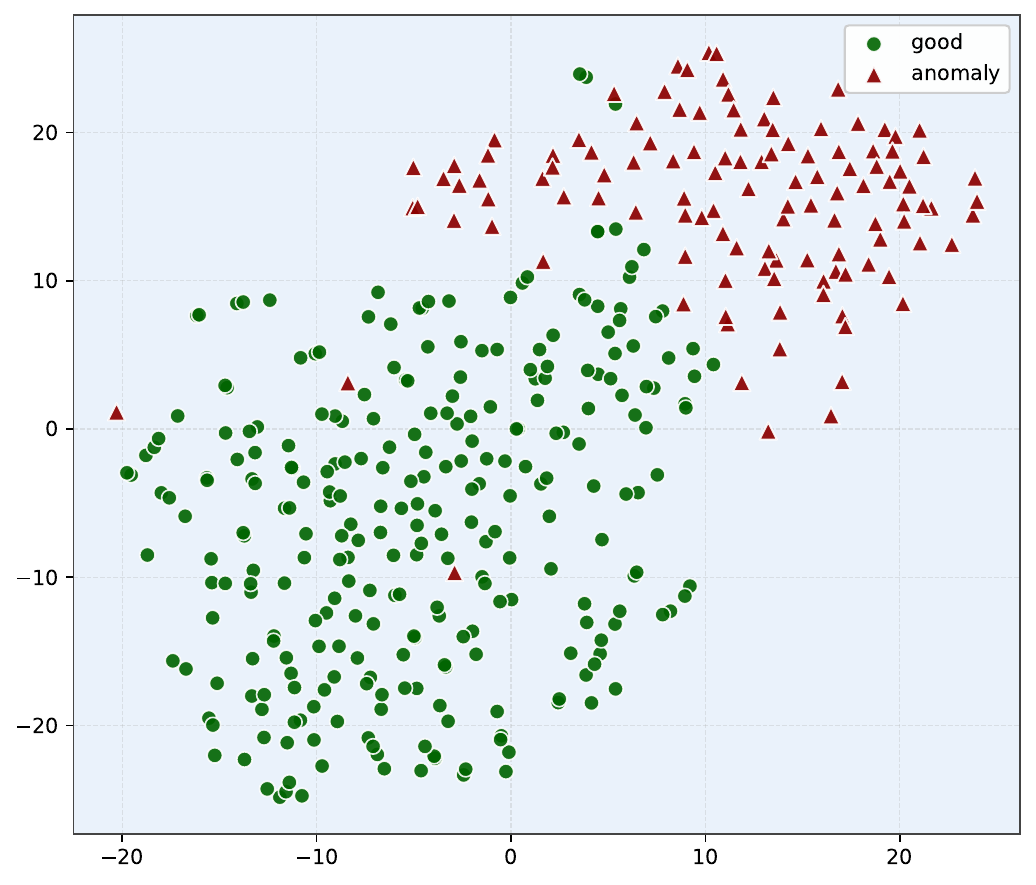}
        \caption{\texttt{zipper}}
    \end{subfigure}

    \caption{\textbf{t-SNE visualization of selected-layer dinov3\_vitb16 <CLS> embeddings} for normal vs. anomalous samples across eight MVTecAD categories.}
    \label{fig:tsne_cls_mvtec8}
\end{figure*}

\section{Effect of the Fusion Strategy}
\label{sec:fusion_appendix}
\cref{tab:lambda_ablation_shots} studies the fusion weight $\lambda$ in our Dual-Branch Image Scoring, where $\lambda{=}0$ uses only the Global Semantic Branch ($S_{\mathrm{cls}}$), $\lambda{=}1$ uses only the Spatial Evidence Branch ($S_{\mathrm{map}}$), and $\lambda{=}0.5$ fuses both. When relying solely on global <CLS> semantics ($\lambda{=}0$), the method remains functional but clearly suboptimal, suggesting that support-to-query <CLS> cosine matching encodes a meaningful normality prior, yet lacks the spatial evidence needed for robust anomaly assessment. This observation is consistent with \cref{fig:tsne_cls_mvtec8}, where normal and anomalous samples are well separated in the <CLS> embedding space across multiple categories, indicating discriminative global representations while not guaranteeing accurate localization-aware scoring. Conversely, using only spatial cues from the anomaly map ($\lambda{=}1$) performs strongly (1-shot: 95.7\%/97.1\%/98.1\%) but is still inferior to the fusion, as isolated high responses or local clutter may dominate map pooling. Notably, the balanced fusion $\lambda{=}0.5$ consistently achieves the best results across all shots, reaching 96.3\%/97.6\%/98.4\% in 1-shot and 96.9\%/97.9\%/98.7\% in 4-shot, confirming that global <CLS>-level semantics and spatial localization cues are complementary: the former stabilizes image-level decisions under distributional variations, while the latter injects fine-grained spatial evidence to improve reliability in challenging cases.

\section{Effect of the Pooling Operator}
\label{sec:pooling_appendix}
\cref{tab:pooling_ablation_shots} evaluates different pooling operators for reducing the anomaly map $A_{map}$ into the map-based image score $S_{\mathrm{map}}$. Overall, \textbf{max} pooling and Top-$1\%$ pooling yield very similar performance across shots. In the 1-shot setting, Top-$1\%$ is slightly better in I-AUROC (96.5\% vs.\ 96.3\%) and I-F1 (97.7\% vs.\ 97.6\%), while tying I-AP at 98.4\%. This suggests that the anomaly map produced by our Sparse Hyper Matching is already highly concentrated on truly anomalous regions and thus benefits only marginally from aggregating multiple high-response locations. In contrast, Top-$10$ pooling is consistently inferior across all shots, indicating that fixed-cardinality pooling is less reliable in our setting, likely because the number of informative high-score patches varies with anomaly scale and background complexity, so a fixed $n$ can either include spurious responses (hurting specificity) or under-summarize extended anomalies (hurting sensitivity). Despite the small gain of Top-$1\%$ over max pooling on MVTecAD, we adopt \textbf{max} pooling as the default to improve robustness in practice: percentile-based pooling introduces an extra ratio hyperparameter whose optimal value can vary with image resolution, category, and anomaly size, whereas max pooling is hyperparameter-free and thus avoids dataset-specific tuning while remaining highly competitive (\eg, 4-shot: 96.9\%/97.9\%/98.7\% for I-AUROC/I-F1-max/I-AP).

\section{More Qualitative Results}
\label{sec:more_qualitative}

\cref{fig:more_qualitative_results_1,fig:more_qualitative_results_2,fig:more_qualitative_results_3,fig:more_qualitative_results_4,fig:more_qualitative_results_5} provide additional 1-shot comparisons between ground-truth (GT) annotations and anomaly maps predicted by Hyper-FSAD$^\ast$. Across industrial and medical images, Hyper-FSAD$^\ast$ consistently concentrates high anomaly responses within the annotated regions, covering compact, elongated, multiple, and diffuse anomalies. Together, they demonstrate robust localization across diverse anomalies without target-task training or language supervision.

\begin{table*}[!t]
\begin{minipage}[t]{0.485\textwidth}
\centering
\caption{The effects of $\lambda$ across different shots on MVTecAD~\cite{bergmann2019mvtec}, evaluated by \textbf{I-AUROC}~(\%), \textbf{I-F1-max}~(\%), and \textbf{I-AP}~(\%).}
\label{tab:lambda_ablation_shots}
\setlength{\tabcolsep}{6pt}
\renewcommand{\arraystretch}{1.08}
\small
\resizebox{0.9\linewidth}{!}{%
\begin{tabular}{c ccc ccc ccc}
\toprule
\multirow{2}{*}{$\lambda$} & \multicolumn{3}{c}{1-shot} & \multicolumn{3}{c}{2-shot} & \multicolumn{3}{c}{4-shot} \\
\cmidrule(lr){2-4}\cmidrule(lr){5-7}\cmidrule(lr){8-10}
& I-AUROC & I-F1 & I-AP & I-AUROC & I-F1 & I-AP & I-AUROC & I-F1 & I-AP \\
\midrule
0   & 90.3 & 93.2 & 94.3 & 91.2 & 94.0 & 94.7 & 91.5 & 94.3 & 95.0 \\
0.5 & \textbf{96.3} & \textbf{97.6} & \textbf{98.4} & \textbf{96.8} & \textbf{97.7} & \textbf{98.5} & \textbf{96.9} & \textbf{97.9} & \textbf{98.7} \\
1   & 95.7 & 97.1 & 98.1 & 96.0 & 97.3 & 98.3 & 96.3 & 97.6 & 98.5 \\
\bottomrule
\end{tabular}%
}
\end{minipage}\hfill
\begin{minipage}[t]{0.485\textwidth}
\centering
\caption{The effects of different pooling operators across shots on MVTecAD~\cite{bergmann2019mvtec}, evaluated by \textbf{I-AUROC}~(\%), \textbf{I-F1}~(\%), and \textbf{I-AP}~(\%).}
\label{tab:pooling_ablation_shots}
\setlength{\tabcolsep}{6pt}
\renewcommand{\arraystretch}{1.08}
\small
\resizebox{0.9\linewidth}{!}{%
\begin{tabular}{c ccc ccc ccc}
\toprule
\multirow{2}{*}{Pooling operator} & \multicolumn{3}{c}{1-shot} & \multicolumn{3}{c}{2-shot} & \multicolumn{3}{c}{4-shot} \\
\cmidrule(lr){2-4}\cmidrule(lr){5-7}\cmidrule(lr){8-10}
& I-AUROC & I-F1 & I-AP & I-AUROC & I-F1 & I-AP & I-AUROC & I-F1 & I-AP \\
\midrule
Top 10  & 95.3 & 95.9 & 97.5 & 95.8 & 96.6 & 97.8 & 95.9 & 97.0 & 97.9 \\
Top 1\% & \textbf{96.5} & \textbf{97.7} & \textbf{98.4} & \textbf{96.9} & 97.6 & \textbf{98.5} & 96.8 & \textbf{97.9} & 98.5 \\
max     & 96.3 & 97.6 & 98.4 & 96.8 & \textbf{97.7} & \textbf{98.5} & \textbf{96.9} & \textbf{97.9} & \textbf{98.7} \\
\bottomrule
\end{tabular}%
}
\end{minipage}
\end{table*}

\begin{figure*}[!t]
    \centering
    \includegraphics[width=0.98\textwidth]{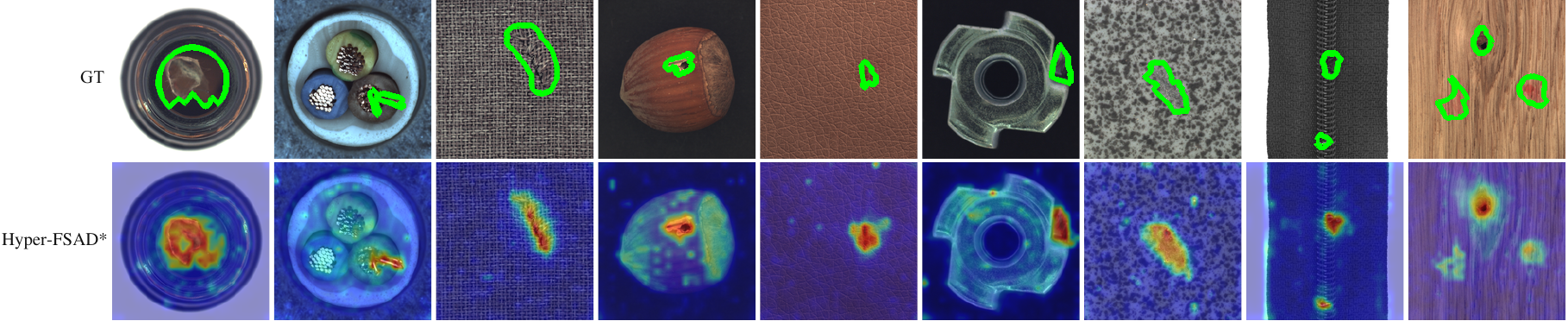}
    \caption{Additional 1-shot comparison between GT annotations (top) and Hyper-FSAD$^\ast$ anomaly maps (bottom), set 1. Hyper-FSAD$^\ast$ denotes our DINOv3 ViT-B/16 instantiation.}
    \label{fig:more_qualitative_results_1}
\end{figure*}

\begin{figure*}[!t]
    \centering
    \includegraphics[width=0.98\textwidth]{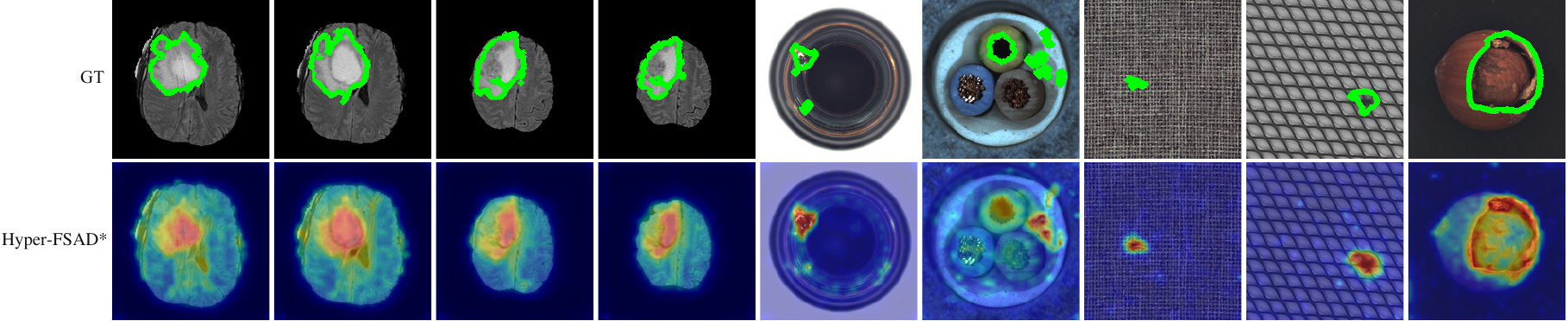}
    \caption{Additional 1-shot comparison between GT annotations (top) and Hyper-FSAD$^\ast$ anomaly maps (bottom), set 2.}
    \label{fig:more_qualitative_results_2}
\end{figure*}
\FloatBarrier

\begin{figure*}[p]
    \centering
    \includegraphics[width=0.98\textwidth]{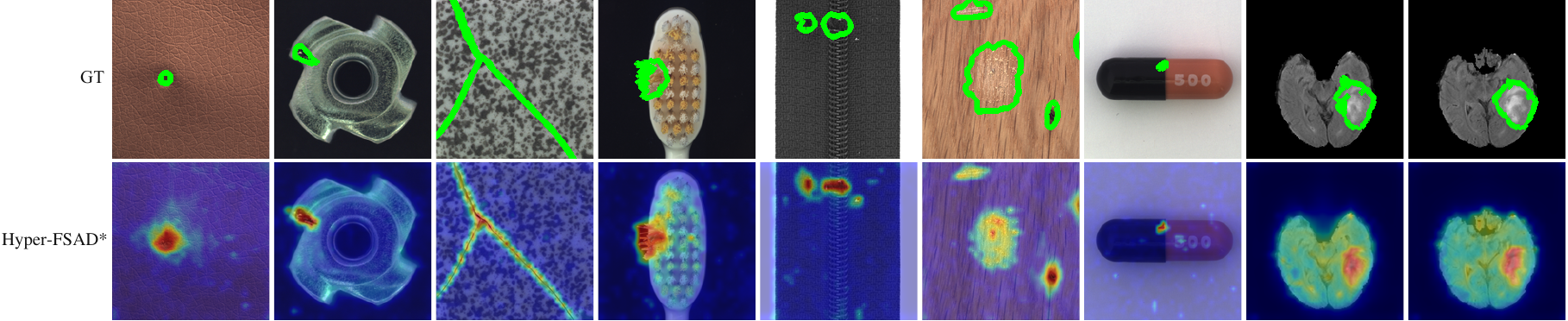}
    \caption{Additional 1-shot comparison between GT annotations (top) and Hyper-FSAD$^\ast$ anomaly maps (bottom), set 3.}
    \label{fig:more_qualitative_results_3}
\end{figure*}

\begin{figure*}[p]
    \centering
    \includegraphics[width=0.98\textwidth]{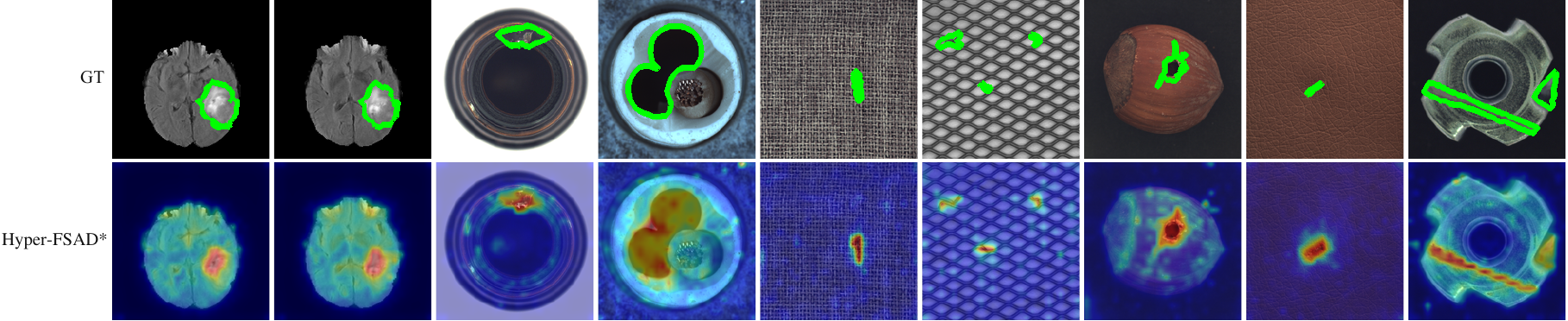}
    \caption{Additional 1-shot comparison between GT annotations (top) and Hyper-FSAD$^\ast$ anomaly maps (bottom), set 4.}
    \label{fig:more_qualitative_results_4}
\end{figure*}

\makeatletter
\setlength{\@dblfptop}{0pt}
\makeatother
\begin{figure*}[p]
    \centering
    \includegraphics[width=0.98\textwidth]{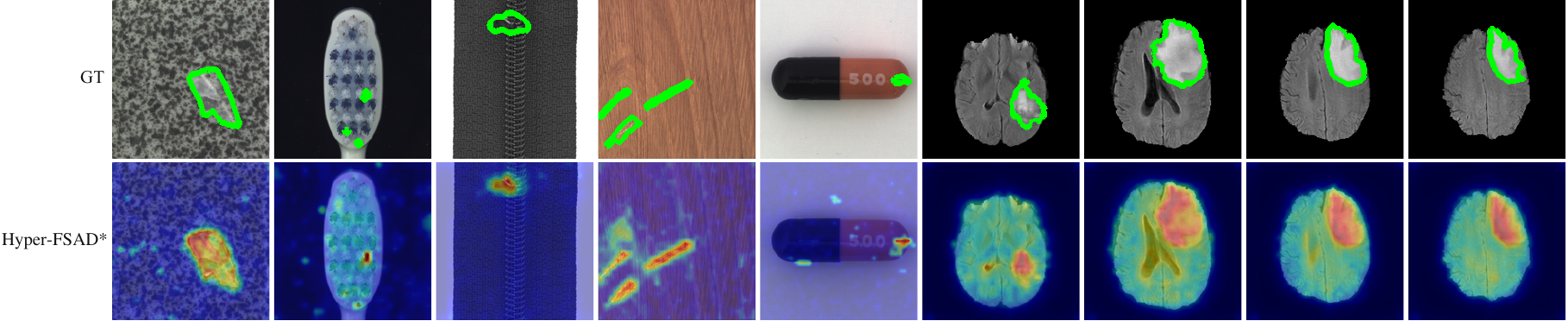}
    \caption{Additional 1-shot comparison between GT annotations (top) and Hyper-FSAD$^\ast$ anomaly maps (bottom), set 5.}
    \label{fig:more_qualitative_results_5}
\end{figure*}

\end{document}